\documentclass{article}

\usepackage{microtype}
\usepackage{graphicx}
\usepackage{subcaption}
\usepackage{booktabs} % for professional tables

\usepackage{hyperref}

\usepackage[preprint]{icml2026}

\usepackage{amsmath}
\usepackage{amssymb}
\usepackage{mathtools}
\usepackage{amsthm}
\usepackage{makecell}

\usepackage[capitalize,noabbrev]{cleveref}

\theoremstyle{plain}
\newtheorem{theorem}{Theorem}[section]

\newtheorem{lemma}[theorem]{Lemma}

\theoremstyle{definition}
\newtheorem{definition}[theorem]{Definition}

\theoremstyle{remark}
\newtheorem{remark}[theorem]{Remark}

\usepackage[textsize=tiny]{todonotes}

\usepackage{enumerate}
\usepackage{siunitx}    % alignment + number formatting (recommended)
\usepackage{pifont}

\usepackage{booktabs}   % 必须：三线表
\usepackage{multirow}   % 必须：多行合并
\usepackage{makecell}   % 必须：表头换行
\usepackage{colortbl}   % 必须：表格颜色
\usepackage{xcolor}     % 必须：颜色定义
\usepackage{pifont}     % 必须：提供 \ding{51} 对勾符号
\definecolor{gray10}{gray}{0.95} % 定义统一的淡灰色

\usepackage{algorithm}
\usepackage{algorithmic}
\usepackage{enumitem}

\usepackage{graphicx}
\usepackage{subcaption}  % replaces subfigure

\newcommand{\E}{\mathbb{E}}

\newcommand{\Ldiff}{\mathcal{L}_{\text{diff}}}
\newcommand{\Lend}{\mathcal{L}_{\text{end}}}
\newcommand{\norm}[1]{\left\lVert#1\right\rVert}
\newcommand{\trace}{\operatorname{Tr}}

\icmltitlerunning{Trajectory-Level Alignment for 2D-3D Cross-Modal Gait Recognition via Latent Diffusion}

\begin{document}

\twocolumn[
  \icmltitle{DiffCrossGait: Trajectory-Level Alignment for\\2D-3D Cross-Modal Gait Recognition via Latent Diffusion}

  % It is OKAY to include author information, even for blind submissions: the
  % style file will automatically remove it for you unless you've provided
  % the [accepted] option to the icml2026 package.

  % List of affiliations: The first argument should be a (short) identifier you
  % will use later to specify author affiliations Academic affiliations
  % should list Department, University, City, Region, Country Industry
  % affiliations should list Company, City, Region, Country

  % You can specify symbols, otherwise they are numbered in order. Ideally, you
  % should not use this facility. Affiliations will be numbered in order of
  % appearance and this is the preferred way.
  \icmlsetsymbol{equal}{*}

  \begin{icmlauthorlist}
    \icmlauthor{Zhiyang Lu}{1,2}
    \icmlauthor{Ming Cheng}{1,2}
  \end{icmlauthorlist}

  \icmlaffiliation{1}{Fujian Key Laboratory of Urban Intelligent Sensing and Computing, Xiamen University, 361005, P.R. China.}
  \icmlaffiliation{2}{Key Laboratory of Multimedia Trusted Perception and Efficient Computing, Ministry of Education of China, Xiamen University, 361005, P.R. China}

  \icmlcorrespondingauthor{Ming Cheng}{chm99@xmu.edu.cn}

  % You may provide any keywords that you find helpful for describing your
  % paper; these are used to populate the "keywords" metadata in the PDF but
  % will not be shown in the document
  \icmlkeywords{Machine Learning, ICML}

  \vskip 0.3in
]

% this must go after the closing bracket ] following \twocolumn[ ...

% This command actually creates the footnote in the first column listing the
% affiliations and the copyright notice. The command takes one argument, which
% is text to display at the start of the footnote. The \icmlEqualContribution
% command is standard text for equal contribution. Remove it (just {}) if you
% do not need this facility.

% Use ONE of the following lines. DO NOT remove the command.
% If you have no special notice, KEEP empty braces:
\printAffiliationsAndNotice{}  % no special notice (required even if empty)
% Or, if applicable, use the standard equal contribution text:
% \printAffiliationsAndNotice{\icmlEqualContribution}

\begin{abstract}
Cross-modal 2D–3D gait recognition is impeded by inherent domain discrepancies between 2D silhouette and 3D LiDAR range-view representations. While prior methods align only final embeddings, we propose \textbf{DiffCrossGait}, which reformulates cross-modal matching as trajectory-level alignment in an identity-relevant latent diffusion space, rather than assuming full equivalence between 2D and 3D observations. By driving both modalities with shared Gaussian noise within a latent space, we enable continuous alignment throughout the generative evolution. We introduce a \textbf{Tri-Phase Alignment Strategy} that exploits varying noise intensities to enforce identity anchoring, dynamics consistency, and cross-modal structural recoverability, thereby constraining both modalities to share denoising dynamics and bottleneck structure, which promotes modality-invariant gait features. Crucially, our framework decouples generative alignment from the discriminative backbone; the diffusion mechanism serves exclusively as a training objective, ensuring high inference efficiency by eliminating the computational overhead of iterative denoising. Extensive experiments on the SUSTech1K and FreeGait benchmarks demonstrate that DiffCrossGait achieves state-of-the-art performance.
\end{abstract}

\section{Introduction}

% ---------------------------------------------------------
% 场景 1: 单栏图片 (Single Column Figure)
% 适用于放在单侧栏宽内的普通图片
% ---------------------------------------------------------

\begin{figure}[t] % 建议使用 [t] 顶端对齐
    \begin{center}
        \centering{\includegraphics[width=0.9\columnwidth]{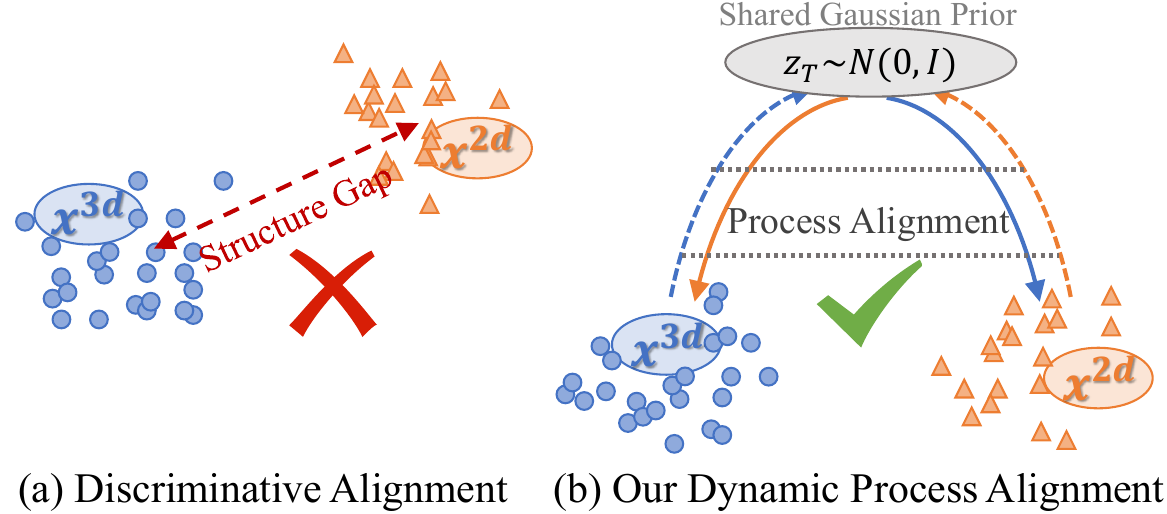}}
         \vspace{-0.3em} % 调整图片与 Caption 的间距
        \caption{\textbf{Concept of Dynamic Process Alignment.}
\textbf{(a)} Static alignment fails to bridge the structural gap between 2D and 3D gait data.
\textbf{(b)} Our method introduces a Shared Gaussian Prior ($z_T$) to unify the latent space. By constraining the diffusion forward and reverse trajectories to remain synchronized, we achieve robust cross-modal alignment that is superior to matching endpoints.}
        \label{fig::motivation}
    \end{center}
    % \vspace{-0.2pt} % 调整 Figure 底部与正文的间距
\end{figure}

Gait is a remote biometric that supports identification at a distance and without contact, making it attractive for surveillance and public-security applications where face or fingerprint acquisition is unreliable~\cite{chen2024egst-GR-TIFS-Event,li2022strong-GR-TMM2022-2DSkeleton,fan2025opengait-GR-TPAMI2025-OpenGait,shen2024comprehensive-GR-TBBIS2024-Survey,sarkar2005humanid-GR-TPAMI2005-GaitOri,yang2025bridging-GR-CVPR2025-2DAppearance-GaitLLM}. With the increasing availability of multimodal sensing, practical systems often observe the same person using both 2D cameras (yielding silhouette sequences) and 3D depth sensors such as LiDAR (yielding geometry-driven sequences). This motivates \textbf{cross-modal 2D–3D gait recognition}, where the goal is to retrieve a person’s identity across modalities~\cite{guo2025camera-CMGR-ECCV2025-CLGait,wang2024cross-CMGR-IJCB2024-CrossGait,filipi2022gait-Intro-CS2022-GaitSurvey}.
A central difficulty is that 2D and 3D gait data encode fundamentally different information: silhouettes preserve projected shape contours, whereas LiDAR-derived representations capture spatial geometry. In this work, the alignment is imposed on standard gait-centric representations: 2D silhouettes and LiDAR range views. The latter preserves sensor-native geometric cues in a regular format, whereas
prematurely projecting it into pseudo-2D silhouettes may reduce the apparent modality gap at the cost of discarding discriminative depth structure. Therefore, our objective is not to make the two modalities identical at the input level, but to align their identity-relevant latent dynamics. As a result, the two modalities follow distinct distributions and exhibit structural discrepancies in their latent manifolds, especially under covariates such as view change, clothing, carried objects, occlusion, and illumination~\cite{zhang2024fine-Intro-CVPR2024-CrossmodalGaps,zhang2025dream-Intro-AAAI2025-CrossmodalGaps}.

Most existing cross-modal gait methods adopt a discriminative alignment perspective: they train encoders with metric learning losses that force 2D and 3D samples of the same identity to be close in a shared embedding space~\cite{jiang2025laboratory-Exp-CVPR2025-VIReID-DPPT,wang2024cross-CMGR-IJCB2024-CrossGait,yu2025clip-vlmkd-VIReID,li2025video-vlmkd-VIReID}. While effective to some extent, this alignment is typically imposed only at the final embedding layer, as shown in Fig.~\ref{fig::motivation}. In practice, endpoint-only constraints can be brittle: two modalities may be coerced to match at the output while still following incompatible latent trajectories, causing alignment to break under distribution shifts. Moreover, rigid alignment losses can conflict with discriminative objectives, making optimization unstable when modalities are highly heterogeneous.

This raises a question: \textbf{can cross-modal alignment be formulated as a trajectory-level constraint rather than an endpoint constraint?} Diffusion models provide a natural mechanism for trajectory modeling: they progressively map data to a near-Gaussian state via forward noise injection and learn to reverse this process via denoising~\cite{song2020denoising-DDIM-Arxiv2020-Diffusion,rombach2022high-LDM-CVPR2020-Diffusion}. Inspired by this property, we propose to embed cross-modal alignment into the diffusion evolution itself by enforcing that the two modalities follow a synchronized diffusion trajectory.
\begin{figure}[t]
    \centering % 使用标准居中命令
    % 调整宽度至 1.0\linewidth 以最大化利用栏宽，保证内部字体清晰
    % 如果图片周围留白较多，请先在绘图软件中裁切 (Crop)
    \includegraphics[width=0.9\linewidth]{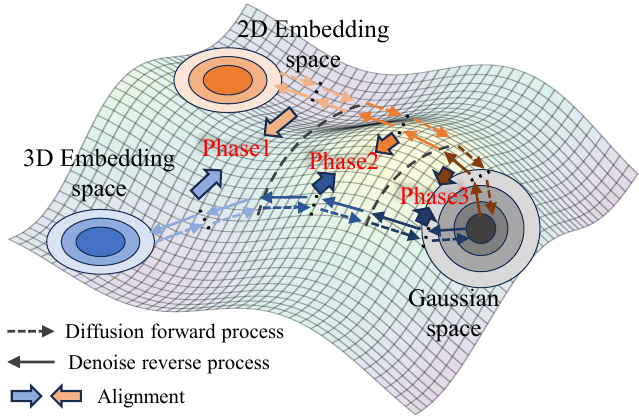}
    
    \vspace{-0.3pt} % 微调图片与 Caption 的间距 (根据实际视觉效果调整)
    
    \caption{\textbf{Trajectory alignment across three noise regimes.} We leverage the varying noise levels to impose hierarchical constraints: anchoring identity semantics at high signal retention (Phase 1), synchronizing denoising dynamics at medium levels (Phase 2), and enforcing structural manifold overlap under strong noise (Phase 3). This enables explicit extraction of modal-invariant structures throughout the generative evolution.}
    \label{fig::demo}
    
    % \vspace{-0.2in} % 减少 Caption 与正文的间距，节省版面
\end{figure}

We introduce \textbf{DiffCrossGait}, a cross-modal 2D–3D gait recognition framework that performs dynamic process alignment through a unified latent diffusion objective. Starting from sequence-level representations extracted by modality-specific backbones, we inject forward noise using an identical noise schedule and, crucially, share the same Gaussian noise sample at each timestep across modalities. A lightweight, parameter-shared denoiser then learns to predict the shared noise and to produce an intermediate bottleneck representation, enabling alignment constraints to act throughout the latent evolution.

To exploit the fact that different noise levels correspond to different semantic granularity, DiffCrossGait applies a \textbf{Tri-Phase Alignment Strategy} along the diffusion timeline, as shown in Fig.~\ref{fig::demo}: Phase1: at small noise, we anchor identity semantics to prevent drift; Phase2: at medium noise, we align denoising dynamics and bottleneck states to synchronize evolution; and Phase3: at large noise, we enforce cross-modal structural recoverability via cross-conditioning, explicitly narrowing the modality gap when signal is weak. Importantly, diffusion is used only during training as an auxiliary objective: at inference time, we discard the diffusion branch and perform recognition using the discriminative backbone, avoiding any iterative denoising overhead. The contributions are summarized as follows:
\begin{itemize}[itemsep=0em, topsep=0em]
\item We propose DiffCrossGait, a diffusion-regularized framework for cross-modal 2D–3D gait recognition that upgrades alignment from endpoint matching to trajectory-level process alignment via shared-noise latent diffusion.

\item We introduce a Tri-Phase Alignment Strategy that applies identity anchoring, dynamics consistency, and cross-modal structural recoverability across noise regimes, yielding stronger modality-invariant representations.

\item We design multi-level consistency objectives (noise prediction, bottleneck bridging state, and reconstruction) and show that diffusion can serve as a training-only objective, preserving inference efficiency.

\item Extensive experiments on standard benchmarks, including SUSTech1K and FreeGait, demonstrate that DiffCrossGait achieves state-of-the-art (SOTA) performance.
\end{itemize}

\textbf{Conflict of Interest Disclosure.}
The authors declare no financial conflicts of interest that could reasonably be perceived to influence this work.

\section{Related Work}

\subsection{Cross-modal Gait Recognition}
The evolution of gait recognition has transitioned from single-modal analysis (typically 2D silhouettes) to complex cross-modal settings~\cite{fu2023gpgait-GR-ICCV2023-2DModel,jin2025denoisinggait-GR-CVPR2025-2DAppearance-DenoisingGait,huang20213d-GR-ICCV2021-2DAppearance,chao2019gaitset-GR-AAAI2019-2DAppearance,fan2020gaitpart-GR-CVPR2020-2DAppearance,han-2005-GR-TPAMI-gaitenergy-2DAppearance,yang2025bridging-GR-CVPR2025-2DAppearance-GaitLLM,chao2021gaitset-GR-TPAMI2021-2DAppearance-HPP,wang2025gaitc-GR-TCSVT2025-2DAppearance-GaitC3,huang2021context-GR-ICCV2021-2DAppearance,ma2024learning-GR-CVPR2024-2DAppearance-VPNet,filipi2022gait-Intro-CS2022-GaitSurvey,wang2025GaitX-ICCV2025-GR,huang2025learningOrigins-ICCV2025-GR-Diffusion,habib2025CarGait-Arxiv(ICCV)2025-GR-Reranking}, necessitated by the proliferation of 3D sensors~\cite{han2024gait-GR-MM2024-3DDepth-Freegait-HMRGait,shen2025lidargait++-GR-CVPR2025-3DPC,lu2025mojo-GR-TIFS2025-3DPC,zheng2022gait-GR-CVPR2022-3D,shen2023lidargait-GR-CVPR2023-3DDepth-SUSTech1K}. Early approaches in this domain, such as GaitSet~\cite{chao2021gaitset-GR-TPAMI2021-2DAppearance-HPP} and its derivatives, focused exclusively on 2D silhouette sequences, learning temporal representations through set-based pooling. However, the advent of 2D-3D cross-modal tasks introduced significant challenges due to the inherent heterogeneity between projected shape contours and spatial geometric structures~\cite{zhang2024fine-Intro-CVPR2024-CrossmodalGaps,zhang2025dream-Intro-AAAI2025-CrossmodalGaps,zuo2024cross-Intro-NeurIPS2024-Discrimination}.
Recent advancements have attempted to bridge this modality gap primarily through metric learning and subspace alignment. LidarGait~\cite{shen2023lidargait-GR-CVPR2023-3DDepth-SUSTech1K} pioneered the benchmarking of LiDAR-based gait but relied on projecting 3D data into 2D depth maps, partly discarding geometric fidelity. Subsequent works like CrossGait~\cite{wang2024cross-CMGR-IJCB2024-CrossGait} and CL-Gait~\cite{guo2025camera-CMGR-ECCV2025-CLGait} introduced more sophisticated feature mapping techniques to align representations in a shared embedding space. Beyond gait, the challenge of aligning heterogeneous modalities is extensively studied in Visible-Infrared Person Re-identification (VI-ReID). IDKL~\cite{ren2024implicit-VIReID-CVPR2024-IDKL} and TVI-LFM~\cite{hu2024empowering-VIReID-NeurIPS2024-TVILFM} employ implicit discriminative knowledge learning or leverage large foundation models to extract modality-invariant features. Other recent approaches like TSKD~\cite{shi2026two-Exp-PR2025-VIReID-TSKD} utilize two-stage knowledge distillation to transfer semantic cues, while SCR~\cite{yu2025no-Exp-IF2025-VIReID-SCR} explores suggestive clues guidance to refine cross-modal matching. While these methods achieve reasonable performance, they fundamentally rely on static endpoint alignment—forcing alignment and fail to model the structural evolution of features within the latent space, often resulting in superficial alignment that collapses under significant covariant shifts. In contrast, our framework shifts the paradigm from static constraint to dynamic process alignment, ensuring consistency throughout the feature evolution trajectory.

\subsection{Diffusion for Cross-Modal Alignment.}
Recently, diffusion models have transcended their generative roots to serve as powerful representation learners~\cite{rombach2022high-LDM-CVPR2020-Diffusion}. In the realm of cross-modal person re-identification, methods like DiVE~\cite{dai2025diffusionDiVE-AAAI2025-VIReID-Diffusion} and IA-Diff~\cite{yu2025identityIADiff-PR2025-VIReID-Diffusion} employ diffusion to generate synthetic cross-modal samples, treating the domain gap as a data scarcity problem. However, these pixel-level generation approaches incur high computational costs and do not explicitly align the latent manifolds of real data. On the other hand, ContextDiff~\cite{yang2024crossContextDiff-ICLR2024-FeatureAlignment-Diffusion} and AlignDiff~\cite{qiu2024aligndiff-ECCV2024-FeatureAlignment-Diffusion} demonstrate that modulating the diffusion trajectory itself can enforce semantic alignment between heterogeneous inputs. We advance this direction by introducing a unified diffusion trajectory for 2D-3D gait, using the shared denoising process as a dynamic structural constraint for the discriminative backbone.

% \begin{figure*}[t]
%     % \vskip 0.2in
%     \begin{center}
%         \centering{\includegraphics[width=0.95\textwidth]{figures/figure_pipeline.pdf}}
%         \caption{Overview of the proposed DiffCrossGait framework. The pipeline takes RGB and point cloud data from cameras and LiDARs as input, preprocessing them into 2D silhouettes and 3D range views, respectively. These inputs are encoded by modality-specific backbones followed by the TP layer to aggregate temporal information. Two mapping layers then disentangle the representations into discriminative features and generative features.The discriminative features are directed through shared modules for the final cross-modal retrieval task. Meanwhile, the generative features feed into an auxiliary diffusion branch designed to align the representation spaces of different modalities. This branch employs a three-stage multi-level diffusion process, where a shared lightweight U-Net performs Self-Condition($\tt{SelfCond}$) or Cross-Condition($\tt{CrossCond}$) denoising with randomly sampled time steps $t$, thereby optimizing the backbone to learn modality-invariant representations.}
%         \label{fig::pipeline}
%     \end{center}
%     \vskip -0.3in
% \end{figure*}

\begin{figure*}[t]
    \centering
    % 使用 1.0\linewidth 充分利用双栏宽度
    \includegraphics[width=0.95\linewidth]{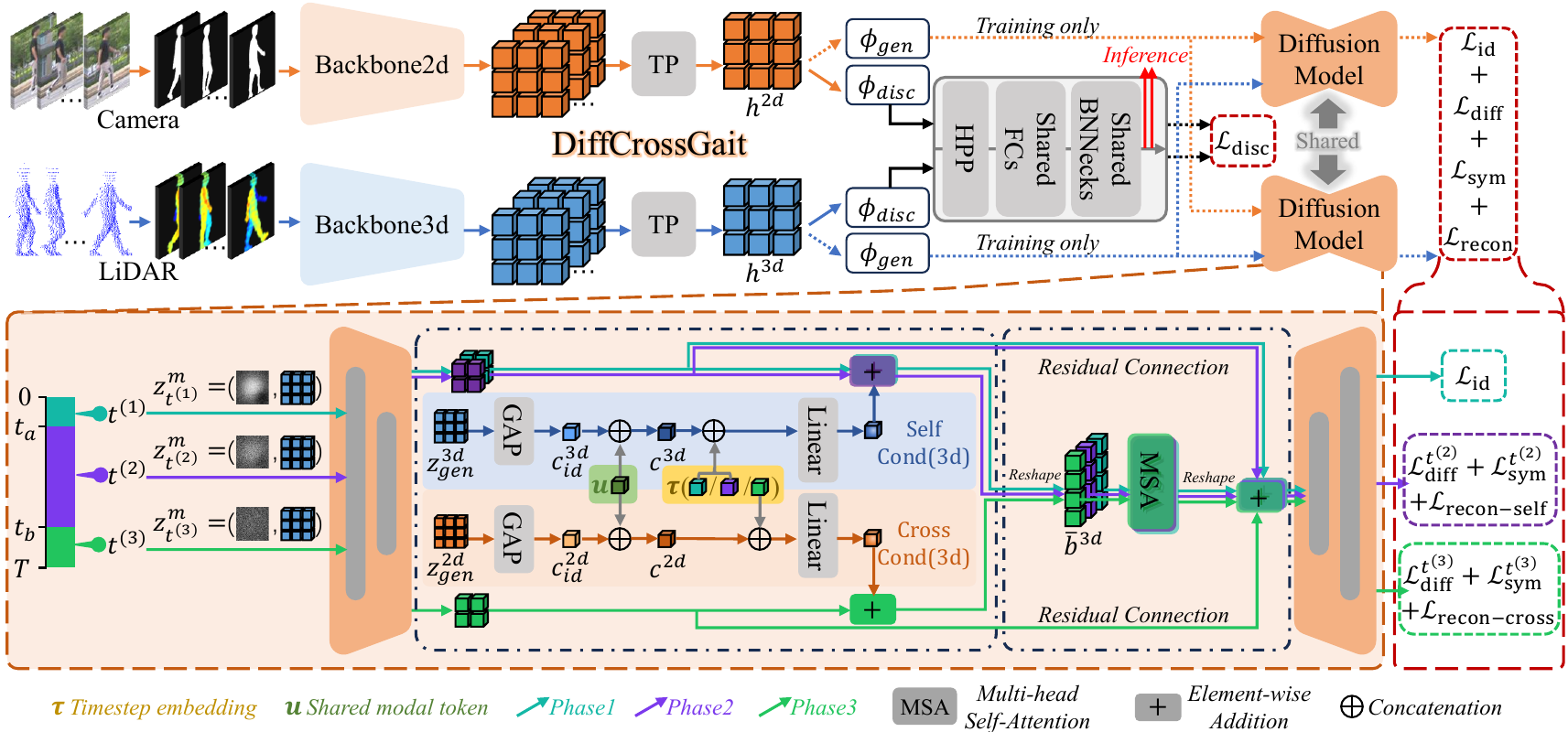}
    
    \vspace{-0.5em} % 调整图片与 Caption 的间距
    
    \caption{\textbf{Overview of the DiffCrossGait framework.} 
    The architecture consists of two parallel streams: 
    \textbf{(Top) The Discriminative Backbone} extracts modality-specific features from 2D video and 3D point cloud sequence, disentangling them into discriminative and generative components. Note that \textit{only the discriminative branch is retained for efficient inference}.
    \textbf{(Bottom) The Auxiliary Diffusion Branch (Training Only)} aligns the modalities via a shared generative process. 
    % We employ a \textbf{Three-Stage Multi-Level Alignment strategy}: by conditioning the denoising process on randomly sampled time steps $t$, we enforce consistency across Structure (large $t$), Dynamics (medium $t$), and Identity (small $t$) levels, implicitly optimizing the backbone to close the domain gap.
    }
    \label{fig::pipeline}
    
    \vspace{-0.4em} % 调整 Figure 底部与正文的间距
\end{figure*}

\section{Methodology}
\subsection{Feature Extraction}
Given input sequences from distinct modalities, denoted as $x^{2d}, x^{3d} \in \mathbb{R}^{B \times S \times C \times H \times W}$, we first employ modality-specific backbones to extract spatiotemporal features. Subsequently, a Temporal Pooling (TP) layer ~\cite{fan2025opengait-GR-TPAMI2025-OpenGait} aggregates these features along the temporal dimension to obtain sequence-level feature maps:
\begin{equation}
h^{m} = \text{TP}\left ( \text{Backbone}_{m}\left ( x^{m} \right ) \right ) \in \mathbb{R}^{B \times C \times H \times W},
\end{equation}
where $m \in \{ 2d, 3d \}$. To reduce interference between recognition and diffusion objectives, we map $h^{m}$ through two separate 1×1 heads: a discriminative head for retrieval $z_{disc}^{m}$ and a generative head for diffusion regularization $z_{gen}^{m}$, as in Fig.~\ref{fig::pipeline}.
% To explicitly decouple the discriminative manifold from the diffusion latent space and mitigate optimization conflicts arising from feature scale discrepancies, we project the sequence-level features via shared decoupled heads—a discriminative head and a generative head:
% \begin{equation}
% z_{disc}^{m}=\phi_{disc}\left ( h^{m} \right ),\quad z_{gen}^{m}=\phi_{gen}\left ( h^{m} \right ).
% \end{equation}
% These mapping heads are implemented as $1\times 1$ convolutions, yielding projected embeddings $z_{disc}^{m}, z_{gen}^{m} \in \mathbb{R}^{B \times C \times H \times W}$.
In the discriminative branch, we apply Horizontal Pyramid Pooling (HPP)~\cite{chao2021gaitset-GR-TPAMI2021-2DAppearance-HPP,fan2023opengait-GR-CVPR2023-OpenGait} along the spatial dimension to perform multi-scale spatial partitioning, yielding part-level features:
\begin{equation}
g_{disc}^{m}=\text{HPP}\left ( z_{disc}^{m} \right ) \in \mathbb{R}^{B\times C \times P} .
\label{eq::hpp}
\end{equation}
Subsequently, these features are projected into a unified embedding space via Shared Fully Connected (FC) layers:
\begin{equation}
e_{disc}^{m}=\text{SharedFCs}\left ( g_{disc}^{m} \right ) \in \mathbb{R}^{B\times D \times P} .
\label{eq::sharedfcs}
\end{equation}
Finally, we employ a Shared BNNeck module~\cite{fan2025opengait-GR-TPAMI2025-OpenGait} to generate the class logits required for classification:
\begin{equation}
l_{disc}^{m}=\text{SharedBNNecks}\left ( e_{disc}^{m} \right ) \in \mathbb{R}^{B\times K \times P} ,
\label{eq::sharedbnnecks}
\end{equation}
where $D$ denotes the dimension of the shared embedding, $K$ represents the number of training identities, and $P$ indicates the number of spatial parts. Consistent with established protocols~\cite{wang2024cross-CMGR-IJCB2024-CrossGait,guo2025camera-CMGR-ECCV2025-CLGait}, $e_{disc}^{m}$ is employed to compute the cross-modal triplet loss, while $l_{disc}^{m}$ is supervised via cross-entropy loss:
\begin{equation}
\mathcal{L}_{\text{disc}}= \mathcal{L}_{\text{tri}}\left ( e_{disc}^{2d},e_{disc}^{3d} \right ) +\mathcal{L}_{\text{ce}}\left ( l_{disc}^{2d},y \right ) +\mathcal{L}_{\text{ce}}\left ( l_{disc}^{3d},y \right ) ,
\label{eq::tri+ce}
\end{equation}
where $\mathcal{L}_{tri}$ enforces cross-modal alignment via bidirectional triplet sampling, and $y$ denotes the ground truth identity labels. In parallel, the generative projection $z_{gen}^{m}$ functions directly as the clean latent state for the Latent Diffusion Model (LDM), establishing the ground truth for the diffusion trajectory and reconstruction objectives. Concurrently, we apply Global Average Pooling (GAP) along the spatial dimension of $z_{gen}^{m}$ to extract an identity-semantic prior:
\begin{equation}
c_{id}^{m} = \text{GAP}(z_{gen}^{m}) \in \mathbb{R}^{B\times C},
\end{equation}
which serves as the high-level condition for the subsequent denoising U-Net. This decoupled-head architecture not only safeguards the integrity and stability of the discriminative branch but also isolates a dedicated latent pathway for the diffusion module, thereby mitigating task interference between discriminative learning and generative modeling. This decoupling also specifies the scope of our shared-dynamics assumption: paired 2D-3D samples are encouraged to share compatible denoising dynamics only in the identity-relevant generative subspace, while modality-specific cues can remain outside this regularized pathway. Thus, diffusion serves as a training-time structural regularizer rather than a constraint of full modality equivalence.

\begin{algorithm}[tb]
   \caption{Training Pipeline of DiffCrossGait}
   \label{alg::diffcrossgait}
   \begin{algorithmic}[1]
      \STATE {\bfseries Input:} Data $(x^{2d}, x^{3d}, y)$; Steps $T$; Thresholds $\rho_a, \rho_b$; Weights $\{\lambda\}$.
      \STATE {\bfseries Output:} Optimized parameters $\Theta$.
      \STATE Initialize schedule $\bar\alpha_t$ and phase boundaries $t_a, t_b$ based on thresholds.
      \REPEAT
         \STATE Sample batch data $(x^{2d}, x^{3d}, y)$, shared noise $\boldsymbol\epsilon \sim \mathcal{N}(0, \mathbf{I})$.
         \STATE Extract decoupled features($z^{m}_{disc},z^{m}_{gen}$) and logits; Initialize loss $\mathcal{L} = \{ \lambda_1 \mathcal{L}_{\text{disc}} \}$.
         \STATE Sample tri-phase time steps $t^{(1)}\!\sim\![1,t_a]$, $t^{(2)}\!\sim\!(t_a,t_b]$, $t^{(3)}\!\sim\!(t_b,T]$.
         \FOR{$k=1$ {\bfseries to} $3$}
            \STATE Diffuse $z_{t^{(k)}}^m$ using $\boldsymbol\epsilon$; \textbf{Select Condition:} $c^m \leftarrow$ \textsc{${\tt SelfCond}$} ($k\!\le\!2$) / ${\tt CrossCond}$ ($k\!=\!3$).
            \STATE Denoise $(\hat{\boldsymbol\epsilon}^m, mid^m) \leftarrow U_\theta(z_{t^{(k)}}^m, t^{(k)}, c^m)$ and recover $\hat{z}_{0}^m$ via Eq.~\ref{eq::reconstruction}.
         \ENDFOR
         \STATE \textbf{Accumulate Objectives:}
         \STATE \quad $\mathcal{L} += \{\lambda_2 \mathcal{L}_{\text{id}} +  \lambda_3 \mathcal{L}_{\text{diff}} + \lambda_4\mathcal{L}_{\text{sym}} + \lambda_5\mathcal{L}_{\text{rec}} \} $.
         \STATE Update $\Theta$ via gradient descent on $\mathcal{L}$.
      \UNTIL{converged}
   \end{algorithmic}
\end{algorithm}
% \vspace{-0.2in}

\subsection{Unified Diffusion Trajectory}
We formulate the latent diffusion as a time-dependent discrete stochastic process. Given $T$ diffusion steps and a noise schedule $\{\beta_t\}_{t=1}^T$, the signal retention coefficients are defined as $\alpha_t = 1-\beta_t$ and $\bar\alpha_t = \prod_{s=1}^{t}\alpha_s$. By sampling a timestep $t\sim \mathcal{U}_{1,\dots,T}$ and Gaussian noise $\boldsymbol\epsilon \sim \mathcal{N}(0,\mathbf{I})$, the cross-modal forward diffusion process is defined as:
\begin{equation}
z^{m}_t = \sqrt{\bar\alpha_t}z^{m}_0 + \sqrt{1-\bar\alpha_t}\boldsymbol\epsilon,
\label{eq::diffusion_forward}
\end{equation}
where $z_{0}^m$ corresponds to the clean sequence-level latent (i.e., $z_{gen}^{m}$). Crucially, paired 2D and 3D latents use the same noise realization $\boldsymbol\epsilon$. This shared stochastic driver does not force identical raw geometry; instead, it makes the denoiser learn compatible score/denoising fields for the identity-relevant components of both modalities. The resulting synchronization provides a trajectory-level regularization signal beyond endpoint matching. For the reverse process, we employ a parameter-shared lightweight denoising network $U_\theta$ to predict the noise and extract a bottleneck bridging state simultaneously:
\begin{equation}
\left ( \hat{\boldsymbol\epsilon}^{m}_{t}, mid^{m}_{t} \right ) = U_\theta\left ( z^{m}_{t},\tau \left ( t \right ) ,c \right ) .
\label{eq::diffusion_backward}
\end{equation}
Here, $\tau(t)$ denotes the timestep embedding, and $c=[ u, c_{id}^{m} ] \in \mathbb{R}^{d_{mod}+C}$ represents the high-level condition formed by concatenating a learnable shared modal token $u \in \mathbb{R}^{d_{mod}}$ with the modality-specific identity prior. We design two routing mechanisms for this condition:
\textbf{1) Self-modal conditioning ($\tt{SelfCond}$):} The denoiser is conditioned on the modality's own prior (i.e., $c^{2d}$ drives $z_t^{2d}$).
\textbf{2) Cross-modal conditioning ($\tt{CrossCond}$):} The identity semantics are swapped, such that the 2D denoising process is driven by $c^{3d}$, and vice versa.
The term $mid^{m}_{t}$ represents the bottleneck bridging state within the U-Net. As the central structural bottleneck through which the denoising flow must propagate, it offers a more direct constraint on the evolutionary consistency of the two modalities compared to endpoint embeddings alone.

\textbf{Lightweight Denoising Architecture.} Unlike deep multi-scale U-Net designed for high-fidelity image generation, our network $U_\theta$ is constructed as a lightweight module optimized specifically for cross-modal representation learning. To capture global dependencies with minimal computational overhead, we introduce an axial attention mechanism within the bottleneck stage. For the bottleneck feature $b^{m}\in\mathbb{R}^{ B\times {C}' \times {H}' \times {W}' }$, we first concatenate the timestep embedding and condition vector, projecting them to generate a channel-wise bias:
\begin{equation}
\Delta^{m} = W\left [\tau \left ( t \right ) ,c \right ] \in \mathbb{R} ^{B\times {C}'\times 1 \times 1}.
\end{equation}
This bias is injected into the feature map via broadcasting:
\begin{equation}
\tilde{b}^{m} = b^{m} + \Delta^{m} .
\end{equation}
Subsequently, we unfold the feature map solely along the vertical spatial axis to construct a sequence set $\bar{b}^{m} \in \mathbb{R} ^{\left ( B\times W' \right ) \times C' \times H' }$. We then apply Multi-Head Self-Attention (MSA) over the dimension $H'$ and perform a residual update to obtain the bridging state:
\begin{equation}
mid_{t}^{m} = b^{m} + \text{Reshape}\big(\text{MSA} ( \bar{b}^{m}) \big).
\end{equation}
This axial attention introduces a specific inductive bias that models the holistic columnar structure of the human body, avoiding the overfitting of local textures. This yields a stable, controllable intermediate representation $mid_{t}^{m}$ that complements the horizontal partitioning of the HPP.

\subsection{Phased Dynamics Alignment}
DiffCrossGait advances cross-modal alignment from static endpoint constraints to dynamic diffusion constraints. To realize this, we establish a stratified noise coordinate system within the latent diffusion process. 
% By enforcing semantic consistency and structural reversibility under varying noise intensities, we explicitly extract modal-invariant gait structures.

\textbf{Adaptive Noise Interval Partitioning.} Given that the signal retention rate $\bar\alpha_t$ decreases monotonically with $t$, we employ two hyperparameters, $\rho_{a}$ and $\rho_{b}$, to segment the temporal axis into three distinct phases. We define the boundary timesteps as:
\begin{equation}
t{a}=\min \left \{ t:\bar\alpha_t \le \rho_{a} \right \}, \quad t_{b}=\min \left \{ t:\bar\alpha_t \le \rho_{b} \right \} ,
\end{equation}
subject to $1\le t_{a} \le t_{b} < T$. This stratification yields three non-overlapping intervals corresponding to Small $[1,t_a]$, Medium $(t_a, t_b]$, and Large $(t_b, T]$ noise levels. Unlike fixed step indices, this partitioning is derived entirely from the schedule function, facilitating robust adaptation to varying total steps $T$ or noise schedules and eliminating the arbitrariness of manual heuristics.

\textbf{Stratified Sampling and Denoising.} In each training iteration, we independently sample timesteps from the three defined regimes: $t^{(1)} \in [ 1,t_{a} ]$, $t^{(2)} \in ( t_{a},t_{b} ]$, and $t^{(3)} \in ( t_{b}, T ]$. For each sampled timestep, we apply the forward diffusion process (Eq.~\ref{eq::diffusion_forward}) to both modalities, driven by the shared noise source. The shared denoising network subsequently predicts the noise component and extracts the bottleneck bridging state (Eq.~\ref{eq::diffusion_backward}) conditioned on the corresponding high-level semantics. Furthermore, we derive an estimate of the clean latent state directly from the noise prediction via:
\begin{equation}
\hat{z}^{m}_{0,t} = \frac{1}{\sqrt{\bar\alpha_t}} \big(z^{m}_t - \sqrt{1-\bar\alpha_t}\hat{\boldsymbol\epsilon}^{m}_t\big).
\label{eq::reconstruction}
\end{equation}
This operation yields the reconstructed latents for each phase: $\hat{z}^{m}_{0,t^{(1)}}$, $\hat{z}^{m}_{0,t^{(2)}}$, and $\hat{z}^{m}_{0,t^{(3)}}$. We then impose distinct alignment constraints for each phase as follows: \textbf{Phase1: Identity Anchoring (Small Noise).} At $t^{(1)}$, the noise perturbation is minimal, and the underlying identity structure remains largely intact. We employ $\tt{SelfCond}$ routing and impose explicit identity supervision on the estimated clean latent $\hat{z}^{m}_{0,t^{(1)}}$ to prevent semantic drift. This constraint ensures that the diffusion branch learns a faithful restoration trajectory, preserving discriminative fidelity rather than inducing manifold distortion. Analogous to the discriminative branch, we project $\hat{z}^{m}_{0,t^{(1)}}$ through the shared prediction heads (SharedFCs and SharedBNNecks) to extract embeddings $e_{id}^{m}$ and logits $l_{id}^{m}$. The identity anchoring loss $\mathcal{L}_{\text{id}}$ is then computed using the formulation defined in Eq.~\ref{eq::tri+ce}.
\textbf{Phase2: Dynamics Consistency (Medium Noise).} In the medium noise regime $t^{(2)}$, applying direct identity supervision risks inducing overfitting to superficial discriminative cues rather than learning robust latent dynamics. Consequently, we shift our focus to constraining the denoising mechanism itself. We enforce consistency in the noise prediction to ensure both modalities yield a unified interpretation of the shared perturbation, and we align the bottleneck bridging states to synchronize the trajectory's key intermediate representations:
\begin{equation}
\mathcal{L}_{\mathrm{diff}}^{t^{(2)}} = \frac{1}{2} \left( | \hat{\boldsymbol\epsilon}_{t^{(2)}}^{2d} - \boldsymbol\epsilon |_2^2 + | \hat{\boldsymbol\epsilon}_{t^{(2)}}^{3d} - \boldsymbol\epsilon |_2^2 \right),
\end{equation}
\begin{equation}
\mathcal{L}_{\mathrm{sym}}^{t^{(2)}} = | mid_{t^{(2)}}^{2d} - mid_{t^{(2)}}^{3d} |_2^2.
\end{equation}
To mitigate potential trivial solutions (e.g., representation collapse) arising from pure alignment constraints, we incorporate an intra-modal $L_1$ reconstruction loss, which regresses the estimated clean latent back to its modality-specific ground truth:
\begin{equation}
\mathcal{L}_{\mathrm{rec-self}} = |\hat{z}_{0,t^{(2)}}^{2d} - z_0^{2d}|_1 + |\hat{z}_{0,t^{(2)}}^{3d} - z_0^{3d}|_1.
\end{equation}
This term imposes an explicit reversibility constraint, balancing dynamic consistency with structural reconstruction fidelity.
\textbf{Phase3: Cross-Modal Alignment (Large Noise).} In the large noise regime $t^{(3)}$, the latent state $z_{t}^{m}$ approximates an isotropic Gaussian distribution, and the denoising process relies heavily on the structural priors provided by the condition vector. Consequently, we transition to $\tt{CrossCond}$ routing. This mechanism compels the two modalities to establish a reciprocal structural recovery mechanism under the shared parameters of $U_{\theta}$. Accordingly, we compute $\mathcal{L}_{\mathrm{diff}}^{t^{(3)}}$ and $\mathcal{L}_{\mathrm{sym}}^{t^{(3)}}$ following the formulation in Phase2. Furthermore, we introduce an inter-modal $L_1$ reconstruction loss, which targets the clean latent of the counterpart modality:
\begin{equation}
\mathcal{L}_{\mathrm{rec-cross}} = |\hat{z}_{0,t^{(3)}}^{2d} - z_0^{3d}|_1 + |\hat{z}_{0,t^{(3)}}^{3d} - z_0^{2d}|_1.
\end{equation}
In contrast to the self-reconstruction of Phase2, $\mathcal{L}_{\mathrm{rec-cross}}$ enforces alignment at the prior level under strong noise perturbation: the denoised output must not only recover valid structural properties but also converge to a consistent manifold defined by the cross-modal anchor, thereby significantly narrowing the domain gap.

\subsection{Overall Optimization and Efficient Inference}
In summary, we optimize the framework by concurrently supervising the discriminative and generative branches for cross-modal recognition. 
% Additionally, to mitigate modality dominance—a common convergence imbalance in multimodal learning where one modality suppresses the gradients of the other—we apply dynamic re-weighting to the Cross-Entropy terms (details in Appendix). 
The total objective function for DiffCrossGait is summarized in Alg.~\ref{alg::diffcrossgait} and formulated as:
\begin{equation}
\begin{aligned}
\mathcal{L}= & \lambda_1 \mathcal{L}_{\text{disc}} + \lambda_2 \mathcal{L}_{\text{id}} \\
& + \lambda_3 \big ( \mathcal{L}_{\mathrm{diff}}^{t^{(2)}}+\mathcal{L}_{\mathrm{diff}}^{t^{(3)}}\big) +\lambda_4 \big ( \mathcal{L}_{\mathrm{sym}}^{t^{(2)}}+\mathcal{L}_{\mathrm{sym}}^{t^{(3)}} \big) \\
& + \lambda_5\big ( \mathcal{L}_{\mathrm{rec-self}}+\mathcal{L}_{\mathrm{rec-cross}} \big ),
\end{aligned}
\end{equation}
where the hyperparameters are empirically set to $\lambda_1=1.0, \lambda_2=0.5, \lambda_3=\lambda_4=0.1, \lambda_5=0.05$.
In the inference time, we exclusively utilize the feature embeddings from the discriminative backbone for recognition.
\newcommand{\best}[1]{\textbf{#1}}
\newcommand{\secondbest}[1]{\underline{#1}}
\newcommand{\gapup}[1]{\textcolor{red}{\fontsize{6pt}{0pt}\selectfont \ensuremath{\,\uparrow\!#1}}}
\newcommand{\gapdown}[1]{\textcolor{blue}{\fontsize{6pt}{0pt}\selectfont \ensuremath{\,\downarrow\!#1}}}

\begin{table*}[t]
\centering
\footnotesize
\caption{\textbf{Quantitative Comparison on SUSTech1K (2D Camera $\to$ 3D LiDAR).} We report Rank-1 and Rank-5 accuracy (\%). $^{\spadesuit}$ denotes methods requiring extra synthetic data for pre-training. Our DiffCrossGait achieves SOTA performance in most covariates, particularly in challenging scenarios like \textit{Night} and \textit{Bag}.}
\label{tab::compare_SUSTech1K_2d3d}
\vspace{-3pt}
% 调整列间距，使其在紧凑和易读之间平衡
\setlength{\tabcolsep}{3.8pt} 
\renewcommand{\arraystretch}{1.15} % 稍微增加行高，提升阅读舒适度

\resizebox{\textwidth}{!}{
\begin{tabular}{llcccccccc|cc}
\toprule
\multicolumn{2}{l}{\multirow{2}{*}{\textbf{Method}}} & \multicolumn{8}{c}{\textbf{Covariate Conditions (Rank-1)}} & \multicolumn{2}{c}{\textbf{Overall(\%)}} \\
\cmidrule(lr){3-10} \cmidrule(lr){11-12}
& & Normal & Bag & Clothing & Carrying & Umbrella & Uniform & Occlusion & Night & \textbf{Rank-1} & \textbf{Rank-5} \\
\midrule
CAJ~\cite{ye2021channel-Exp-ICCV2021-CAJ} & ICCV'21 & 16.4 & - & 7.5 & - & 7.4 & - & - & 2.4 & 11.3 & 30.1 \\
SAAI~\cite{fang2023visible-Exp-ICCV2023-SAAI} & ICCV'23 & 22.4 & - & 14.3 & - & 14.0 & - & - & 5.3 & 23.1 & 49.5 \\
LidarGait~\cite{shen2023lidargait-GR-CVPR2023-3DDepth-SUSTech1K} & CVPR'23 & 18.2 & - & 3.4 & - & 3.4 & - & - & 4.7 & 9.6 & 28.1 \\
CL-Gait$^{\spadesuit}$~\cite{guo2025camera-CMGR-ECCV2025-CLGait} & ECCV'24 & - & - & - & - & - & - & - & - & \secondbest{55.1} & 77.3 \\
CrossGait~\cite{wang2024cross-CMGR-IJCB2024-CrossGait} & IJCB'24 & \secondbest{63.2} & - & \secondbest{30.6} & - & 38.5 & - & - & \secondbest{11.8} & 53.6 & 77.0 \\
IDKL~\cite{ren2024implicit-VIReID-CVPR2024-IDKL} & CVPR'24 & 60.3 & 49.8 & 29.2 & 48.5 & 36.9 & 50.7 & 64.2 & 9.4 & 52.2 & 75.2 \\
TVI-LFM~\cite{hu2024empowering-VIReID-NeurIPS2024-TVILFM} & NeurIPS'24 & 61.0 & 50.3 & 30.1 & 50.2 & 37.5 & 51.0 & 66.5 & 10.0 & 53.0 & 76.1 \\
TSKD~\cite{shi2026two-Exp-PR2025-VIReID-TSKD} & PR'25 & 52.1 & 43.6 & 27.9 & 48.0 & 32.7 & 41.1 & 55.6 & 6.3 & 42.6 & 65.8 \\
SCR~\cite{yu2025no-Exp-IF2025-VIReID-SCR} & IF'25 & 61.3 & \secondbest{52.9} & 29.6 & \secondbest{53.0} & \secondbest{39.1} & \secondbest{53.7} & \best{69.4} & 10.3 & 54.9 & \secondbest{78.1} \\
\midrule % 分割线，区分SOTA和Ours
\rowcolor{gray!10} % 使用更淡的灰色，保持专业感
\textbf{DiffCrossGait (Ours)} & \textbf{-} 
& \best{70.6}\gapup{7.4}
& \best{62.3}\gapup{9.4}
& \best{37.9}\gapup{7.3}
& \best{58.5}\gapup{5.5}
& \best{44.1}\gapup{5.0}
& \best{56.7}\gapup{3.0}
& \secondbest{68.4}\gapdown{1.0}
& \best{12.7}\gapup{0.9}
& \best{58.7}\gapup{3.6}
& \best{79.5}\gapup{1.4} \\
\bottomrule
\end{tabular}
}
\vspace{-2pt} % 调整表格与正文的间距
\end{table*}

\begin{table*}[t]
\centering
\footnotesize
\caption{\textbf{Quantitative Comparison on SUSTech1K (3D LiDAR $\to$ 2D Camera).} $^{\spadesuit}$ denotes methods pre-trained on synthetic data. Our method outperforms SOTA competitors by a large margin in most conditions, establishing a new benchmark for cross-modal retrieval.}
\label{tab::compare_SUStech1K_3d2d}
\vspace{-3pt}
% 保持与 Table 1 完全一致的排版设置
\setlength{\tabcolsep}{3.8pt} 
\renewcommand{\arraystretch}{1.15} 

\resizebox{\textwidth}{!}{
\begin{tabular}{llcccccccc|cc}
\toprule
\multicolumn{2}{l}{\multirow{2}{*}{\textbf{Method}}} & \multicolumn{8}{c}{\textbf{Covariate Conditions (Rank-1)}} & \multicolumn{2}{c}{\textbf{Overall(\%)}} \\
\cmidrule(lr){3-10} \cmidrule(lr){11-12}
& & Normal & Bag & Clothing & Carrying & Umbrella & Uniform & Occlusion & Night & \textbf{Rank-1} & \textbf{Rank-5} \\
\midrule
CAJ~\cite{ye2021channel-Exp-ICCV2021-CAJ} & ICCV'21 & 15.3 & - & 6.4 & - & 13.0 & - & - & 2.3 & 12.3 & 32.3 \\
SAAI~\cite{fang2023visible-Exp-ICCV2023-SAAI} & ICCV'23 & 26.5 & - & 21.9 & - & 23.2 & - & - & 3.2 & 26.1 & 54.1 \\
LidarGait~\cite{shen2023lidargait-GR-CVPR2023-3DDepth-SUSTech1K} & CVPR'23 & 23.2 & - & 14.2 & - & 24.7 & - & - & 2.4 & 18.3 & 39.6 \\
CL-Gait$^{\spadesuit}$~\cite{guo2025camera-CMGR-ECCV2025-CLGait} & ECCV'24 & - & - & - & - & - & - & - & - & 53.3 & 75.6 \\
CrossGait~\cite{wang2024cross-CMGR-IJCB2024-CrossGait} & IJCB'24 & \secondbest{62.2} & - & 35.4 & - & 57.8 & - & - & \best{10.3} & 56.4 & \secondbest{79.8} \\
IDKL~\cite{ren2024implicit-VIReID-CVPR2024-IDKL} & CVPR'24 & 59.6 & 52.3 & 31.0 & 49.5 & 55.2 & \secondbest{56.1} & 65.3 & 7.9 & 54.8 & 77.1 \\
TVI-LFM~\cite{hu2024empowering-VIReID-NeurIPS2024-TVILFM} & NeurIPS'24 & 60.4 & 53.0 & 32.7 & 51.6 & 56.4 & 55.8 & 69.2 & 9.1 & 55.7 & 78.5 \\
TSKD~\cite{shi2026two-Exp-PR2025-VIReID-TSKD} & PR'25 & 50.1 & 41.3 & 27.7 & 42.8 & 45.9 & 46.2 & 52.5 & 7.8 & 47.2 & 68.1 \\
SCR~\cite{yu2025no-Exp-IF2025-VIReID-SCR} & IF'25 & 61.6 & \secondbest{54.1} & \secondbest{35.8} & \secondbest{52.0} & \secondbest{58.1} & 55.9 & \secondbest{72.6} & \secondbest{10.2} & \secondbest{57.7} & 79.5 \\
\midrule
% Ours 行
\rowcolor{gray!10}
\textbf{DiffCrossGait (Ours)} & \textbf{-} 
& \best{73.5}\gapup{11.3}
& \best{65.5}\gapup{11.4}
& \best{44.3}\gapup{8.5}
& \best{61.3}\gapup{9.3}
& \best{66.3}\gapup{8.2}
& \best{67.6}\gapup{11.5}
& \best{73.5}\gapup{0.9}
& 8.1\gapdown{2.2}
& \best{63.8}\gapup{6.1}
& \best{83.4}\gapup{3.6} \\
\bottomrule
\end{tabular}}
\vspace{-2pt}
\end{table*}

% \begin{table}[t] \footnotesize
% \caption{\textbf{Accuracy of cross-modal gait recognition on the FreeGait dataset.}} 
% \centering
% \setlength{\tabcolsep}{3.5pt} % 增加列间距，避免太挤
% \renewcommand{\arraystretch}{1.2}
% \resizebox{\linewidth}{!}{
% \begin{tabular}{cc|cccc}
% % \toprule
% \toprule
% \multicolumn{2}{c|}{\multirow{2}{*}{\textbf{Methods}}} & \multicolumn{2}{c}{\textbf{2D $\to$ 3D}} & \multicolumn{2}{c}{\textbf{3D $\to$ 2D}} \\
% \cmidrule{3-6}
%  & & \textbf{R-$1$} & \textbf{R-$5$} & \textbf{R-$1$} & \textbf{R-$5$} \\
% \midrule
% HMRNet\cite{han2024gait-GR-MM2024-3DDepth-Freegait-HMRGait} & MM'24 & 23.5 & 55.7 & 25.1 & 57.0 \\
% CrossGait\cite{wang2024cross-CMGR-IJCB2024-CrossGait} & IJCB'24 & 29.6 & 60.8 & 32.3 & 65.9 \\
% IDKL~\cite{ren2024implicit-VIReID-CVPR2024-IDKL} & CVPR'24 & 36.7 & 67.4 & 39.5 & 70.3 \\
% TVI-LFM~\cite{hu2024empowering-VIReID-NeurIPS2024-TVILFM} & NeurIPS24 & 38.9 & 69.1 & 41.0 & 71.8 \\

% TSKD~\cite{shi2026two-Exp-PR2025-VIReID-TSKD} & PR'25 & 25.1 & 57.9 & 26.7 & 60.8 \\

% SCR~\cite{yu2025no-Exp-IF2025-VIReID-SCR} & IF'25 & \underline{40.1} & \underline{72.0} & \underline{43.3} & \underline{75.9} \\

% \rowcolor{gray!20}
% \textbf{DiffCrossGait} & Ours & \textbf{58.5}\Up{18.4} & \textbf{86.9}\Up{14.9} & \textbf{61.5}\Up{18.2} & \textbf{86.8}\Up{10.9} \\

% \bottomrule
% % \bottomrule
% \end{tabular}
% }
% \label{tab::compare_FreeGait}
% \end{table}
% 依赖之前定义的宏: \best, \secondbest, \gapup, \gapdown

\begin{table}[t]
\centering
\footnotesize
\caption{\textbf{Quantitative Comparison on FreeGait Dataset.} Our DiffCrossGait achieves significant performance gains (+10\%$\sim$18\%) on this large-scale benchmark, demonstrating robust generalization.}
\label{tab::compare_FreeGait}
\vspace{-3pt} % 节省单栏下方的空间
% 针对单栏表格优化列间距，使其视觉上更紧凑
\setlength{\tabcolsep}{0.5pt} 
\renewcommand{\arraystretch}{1.15}

\resizebox{\linewidth}{!}{
\begin{tabular}{llcccc}
\toprule
\multicolumn{2}{l}{\multirow{2}{*}{\textbf{Method}}} & \multicolumn{2}{c}{\textbf{2D $\to$ 3D(\%)}} & \multicolumn{2}{c}{\textbf{3D $\to$ 2D(\%)}} \\
\cmidrule(lr){3-4} \cmidrule(lr){5-6}
& & \textbf{Rank-1} & \textbf{Rank-5} & \textbf{Rank-1} & \textbf{Rank-5} \\
\midrule
HMRNet~\cite{han2024gait-GR-MM2024-3DDepth-Freegait-HMRGait} & MM'24 & 23.5 & 55.7 & 25.1 & 57.0 \\
CrossGait~\cite{wang2024cross-CMGR-IJCB2024-CrossGait} & IJCB'24 & 29.6 & 60.8 & 32.3 & 65.9 \\
IDKL~\cite{ren2024implicit-VIReID-CVPR2024-IDKL} & CVPR'24 & 36.7 & 67.4 & 39.5 & 70.3 \\
TVI-LFM~\cite{hu2024empowering-VIReID-NeurIPS2024-TVILFM} & NeurIPS'24 & 38.9 & 69.1 & 41.0 & 71.8 \\
TSKD~\cite{shi2026two-Exp-PR2025-VIReID-TSKD} & PR'25 & 25.1 & 57.9 & 26.7 & 60.8 \\
SCR~\cite{yu2025no-Exp-IF2025-VIReID-SCR} & IF'25 & \secondbest{40.1} & \secondbest{72.0} & \secondbest{43.3} & \secondbest{75.9} \\
\midrule
% Ours 行
\rowcolor{gray!10}
\textbf{DiffCrossGait (Ours)} & \textbf{-} 
& \best{58.5}\gapup{18.4}
& \best{86.9}\gapup{14.9}
& \best{61.5}\gapup{18.2}
& \best{86.8}\gapup{10.9} \\
\bottomrule
\end{tabular}
}
\vspace{-0.3pt} % 节省单栏下方的空间
\end{table}

%% Table 4: Inference Efficiency
\begin{table}[t] 
\centering
\footnotesize
\caption{\textbf{Inference Efﬁciency Analysis.} Comparison of model complexity and inference speed. Note that DiffCrossGait discards the diffusion branch during inference, maintaining the same computational cost as the baseline backbone while signiﬁcantly boosting performance.}
\label{tab::complexity}
\vspace{-3pt}
\setlength{\tabcolsep}{1.5pt}
\renewcommand{\arraystretch}{1.2}
\resizebox{\linewidth}{!}{
\begin{tabular}{l|ccc|c}
\toprule
\multirow{2}{*}{Method} & \multicolumn{3}{c|}{Inference Complexity*} & \multicolumn{1}{c}{Performance} \\
 & \textbf{Params (M)} & \textbf{FLOPs (G)} & \textbf{Time (ms)} & \textbf{Rank-1 (\%)} \\
\midrule
% 对比竞品（如果能查到数据就填，查不到可以去掉这一行，或者只保留有数据的SOTA）
IDKL \cite{ren2024implicit-VIReID-CVPR2024-IDKL} & 74.1  & 1.1 & 10.5 & 54.8 \\
SCR \cite{yu2025no-Exp-IF2025-VIReID-SCR}        & 117.8 & 0.9 & 11.7 & 57.7 \\
\midrule
% 基准模型
Baseline (ResNet-9) & \textbf{13.7} & \textbf{1.5} & \textbf{5.9} & 54.7 \\
% 你的模型 (重点是前三列数据应该和Baseline几乎一样，或者完全一样)
\rowcolor{gray!10}
\textbf{DiffCrossGait (Ours)} & \textbf{14.0} & \textbf{1.4} & \textbf{6.2} & \textbf{63.8 \textcolor{red}{(+9.1)}} \\
\bottomrule
\end{tabular}}
\begin{flushleft}
\scriptsize * Testing on a single NVIDIA RTX 3090 GPU.
\end{flushleft}
\vspace{-3pt}
\end{table}

% Ablation Table 2: Mechanism and Architecture
\begin{table}[t] 
\centering
\footnotesize
\caption{\textbf{Ablation of Key Mechanisms and Architectural Components.} We verify the necessity of the unified trajectory design and the lightweight architecture. \textit{Shared Noise} denotes driving both modalities with identical Gaussian noise.}
\label{tab::ablation_components}
\vspace{-3pt}
\resizebox{\linewidth}{!}{
\begin{tabular}{cccc|cc}
\toprule
\multicolumn{2}{c}{Trajectory Mechanism} & \multicolumn{2}{c|}{Architecture} & \multicolumn{2}{c}{Metric (\%)} \\
\cmidrule(lr){1-2} \cmidrule(lr){3-4}
\makecell{Shared\\Noise} & \makecell{Decoupled\\Heads} & \makecell{Shared\\Modal Token} & \makecell{Axial\\Attention} & Rank-1 & Rank-5 \\
\midrule
% 验证 Shared Noise 的核心地位 (如果不共享噪声，轨迹无法对齐)
 & \ding{51} & \ding{51} & \ding{51} & 61.8 & 81.9 \\
\midrule
% 验证 Head 解耦的重要性
\ding{51} &  & \ding{51} & \ding{51} & 62.7 & 82.7 \\
% 验证 Condition 中 token 的作用
\ding{51} & \ding{51} &  & \ding{51} & 60.3 & 81.0 \\
% 验证 Axial Attention (轻量化设计)
\ding{51} & \ding{51} & \ding{51} &  & 62.0 & 82.1 \\
\midrule
% 完整模型
\rowcolor{gray!10}
\ding{51} & \ding{51} & \ding{51} & \ding{51} & \textbf{63.8} & \textbf{83.4} \\
\bottomrule
\end{tabular}}
\vspace{-2pt}
\end{table}

% %% Ablation Table Our Modules
% \begin{table}[t] \footnotesize
% \caption{Impact of different phases in the latent diffusion training time. Results are evaluated on the 2D$\to$3D setting of the SUSTech1K dataset. }
% \centering
% % \setlength{\tabcolsep}{1.5pt}
% % \renewcommand{\arraystretch}{1.2}
% \resizebox{\linewidth}{!}{
% \begin{tabular}{c|cccc}
% % \toprule
% \toprule
% Phase1 & Phase2 & Phase3 & Rank-1 & Rank-5\\
% \midrule
% \ding{52} & \ding{52} &           &  &  &  \\
% \ding{52} &           & \ding{52} &  &  &  \\
%           & \ding{52} & \ding{52} &  &  &  \\
%           &           &           &  &  &  \\
% \ding{52} & \ding{52} & \ding{52} &  &  &  \\
% \bottomrule
% \end{tabular}
% }
% \label{tab::ablation_phase}
% \end{table}

%% Ablation Table 1: Multi-Level Alignment Strategy
\begin{table}[t] \footnotesize
\centering
\footnotesize
\caption{\textbf{Impact of the Tri-Phase Alignment Strategy}. We utilize varying noise intensities to impose hierarchical constraints. Phase1: Small noise interval (Identity Anchoring); Phase2: Medium noise interval (Dynamics Consistency); Phase3: Large noise interval (Cross-Modal Alignment).}
\label{tab::ablation_phases}
\vspace{-3pt}
\setlength{\tabcolsep}{8.5pt} % 增加列间距，避免太挤
\resizebox{\linewidth}{!}{
\begin{tabular}{c|ccc|cc}
\toprule
\multirow{2}{*}{Exp.} & \multicolumn{3}{c|}{Noise Intervals \& Constraints} & \multicolumn{2}{c}{Metric (\%)} \\
\cmidrule(lr){2-4}
 & Phase1 & Phase2 & Phase3 & Rank-1 & Rank-5 \\
\midrule
% Baseline: 纯鉴别模型，无Diffusion约束
Base &  &  &  & 54.7 & 76.5 \\ % 假设这是Baseline分数
\midrule
% 逐步叠加证明每一阶段的有效性
1 & \ding{51} &           &           & 60.6 & 81.0 \\
2 & \ding{51} & \ding{51} &           & 63.1 & 82.9 \\
3 & \ding{51} &           & \ding{51} & 62.5 & 82.6 \\ % 视情况保留，如果逻辑是必须循序渐进，此行可删
4 &           & \ding{51} & \ding{51} &  61.8 & 82.0 \\ % 视情况保留
\rowcolor{gray!10} % 高亮最终方案
Ours & \ding{51} & \ding{51} & \ding{51} & \textbf{63.8} & \textbf{83.4} \\
\bottomrule
\end{tabular}}
\vspace{-3pt}
\end{table}

%% Table 3: U-Net Architecture Ablation
\begin{table}[t] 
\centering
\footnotesize
\caption{\textbf{Ablation of the Denoising U-Net Architecture.} We investigate the impact of weight sharing and model capacity on recognition performance and training cost. \textit{Indep.}: Independent U-Nets for 2D/3D; \textit{Heavy}: Standard deep U-Net structure.}
\label{tab::ablation_unet}
\vspace{-3pt}
\setlength{\tabcolsep}{4pt}
\renewcommand{\arraystretch}{1.2}
\resizebox{\linewidth}{!}{
\begin{tabular}{cc|cc|cc}
\toprule
\multicolumn{2}{c|}{U-Net Conﬁguration} & \multicolumn{2}{c|}{Training Cost} & \multicolumn{2}{c}{Metric (\%)} \\
\cmidrule(lr){1-2} \cmidrule(lr){3-4} 
\textbf{Weight Sharing} & \textbf{Model Size} & \textbf{Params (M)} & \textbf{Time (h)*} & Rank-1 & Rank-5 \\
\midrule
% 实验1: 不共享权重，轻量级 (证明必须共享权重才能对齐)
Independent & Lightweight & $\sim$1.2$\times$ & $\sim$1.1$\times$ & 63.3 & 83.1 \\ 
% 实验2: 共享权重，重量级 (证明不需要太复杂的网络，轻量级足够且快)
Shared & Heavyweight & $\sim$2.6$\times$ & $\sim$1.7$\times$ & 59.1 & 79.8 \\ 
\rowcolor{gray!10}
% 实验3: 本文方案 (高性价比)
\textbf{Shared (Ours)} & \textbf{Lightweight} & \textbf{1$\times$} & \textbf{1$\times$} & \textbf{63.8} & \textbf{83.4} \\
\bottomrule
\end{tabular}}
\begin{flushleft}
\scriptsize * Training on a single NVIDIA RTX 3090 GPU.
\end{flushleft}
\vspace{-0.3pt}
\end{table}

\section{Experiments}
\subsection{Datasets and Evaluation Protocols}
We evaluate DiffCrossGait on two cross-modal LiDAR--camera gait benchmarks: SUSTech1K \cite{shen2023lidargait-GR-CVPR2023-3DDepth-SUSTech1K} and FreeGait \cite{han2024gait-GR-MM2024-3DDepth-Freegait-HMRGait}. Following standard protocols, we report Rank-1 and Rank-5 accuracy (\%) for two retrieval directions: \textbf{2D$\to$3D} and \textbf{3D$\to$2D}. In addition to overall accuracy, we provide a breakdown across eight covariate conditions to stress-test robustness under various conditions, including appearance shifts, partial occlusion, and illumination changes. Details of the datasets and the implementations are provided in the Appendix~\ref{ap::datasets} and ~\ref{ap::implementation}.

\subsection{Comparative Analysis}
\textbf{Baseline.}
We compare against representative cross-modal gait methods (e.g., CrossGait \cite{wang2024cross-CMGR-IJCB2024-CrossGait} and CL-Gait \cite{guo2025camera-CMGR-ECCV2025-CLGait}), early LiDAR/cross-modal pipelines (e.g., CAJ \cite{ye2021channel-Exp-ICCV2021-CAJ}, SAAI \cite{fang2023visible-Exp-ICCV2023-SAAI}, LidarGait \cite{shen2023lidargait-GR-CVPR2023-3DDepth-SUSTech1K}), and competitive visible-infrared ReID models adapted to cross-modal retrieval (IDKL \cite{ren2024implicit-VIReID-CVPR2024-IDKL}, TVI-LFM \cite{hu2024empowering-VIReID-NeurIPS2024-TVILFM}, TSKD \cite{shi2026two-Exp-PR2025-VIReID-TSKD}, SCR \cite{yu2025no-Exp-IF2025-VIReID-SCR}).

\textbf{Main results on SUSTech1K (2D$\to$3D).}
Tab.~\ref{tab::compare_SUSTech1K_2d3d} shows that DiffCrossGait establishes a new state-of-the-art in the Camera$\to$LiDAR retrieval setting, which outperforms the strongest reported competitors by +3.6 Rank-1 and +1.4 Rank-5. Beyond the overall gain, DiffCrossGait improves consistently across the majority of covariates. These cases are precisely where endpoint-based discriminative alignment is fragile. DiffCrossGait mitigates this by aligning the denoising dynamics rather than only the final embedding. Empirically, this phased constraint acts like a curriculum over noise scales, encouraging invariances that survive modality changes and covariate shifts. 
The only exception is \textbf{Occlusion}, where DiffCrossGait is slightly lower than SCR (68.4 vs.\ 69.4, i.e., \textbf{-1.0} Rank-1), indicating that extreme missing-body evidence remains challenging even under trajectory regularization; nevertheless, the method remains competitive (second-best) while maintaining stronger overall robustness.

\begin{figure}[t]
    \centering % 使用标准居中命令
    % 调整宽度至 1.0\linewidth 以最大化利用栏宽，保证内部字体清晰
    % 如果图片周围留白较多，请先在绘图软件中裁切 (Crop)
    \includegraphics[width=\linewidth]{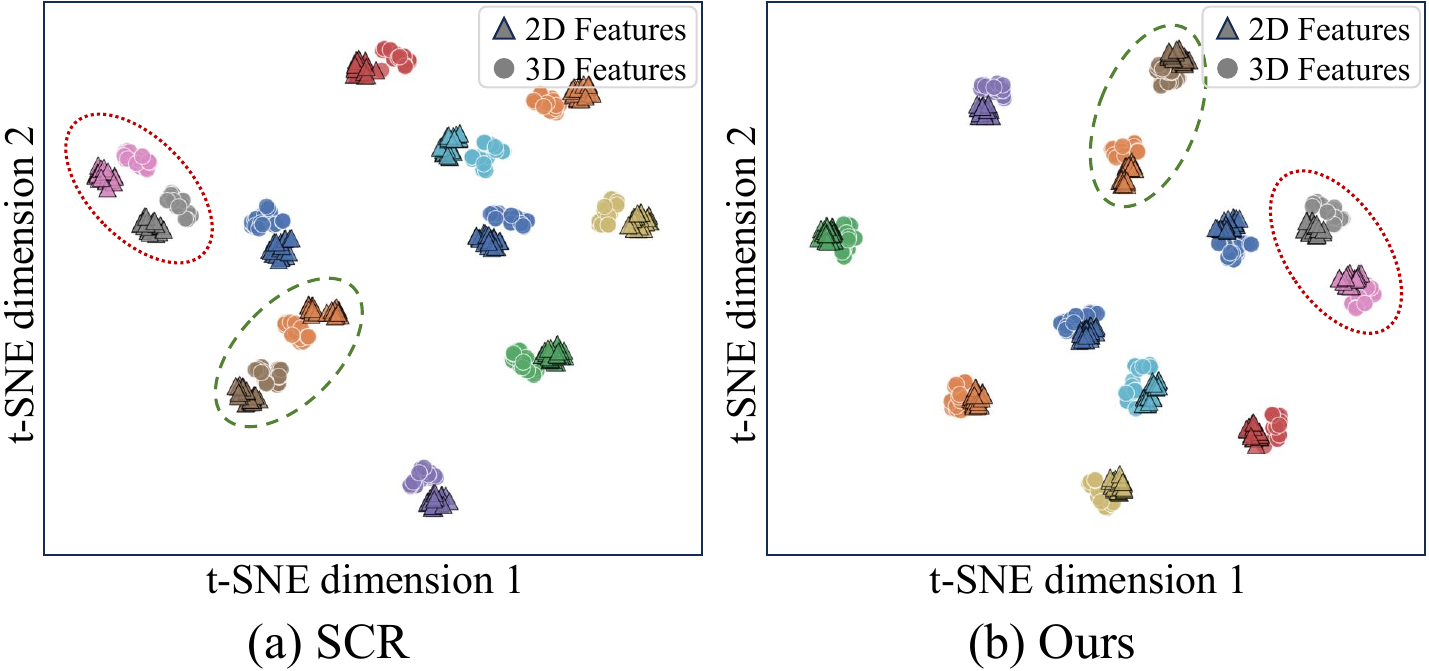}
    
    \vspace{-0.05in} % 微调图片与 Caption 的间距 (根据实际视觉效果调整)
    
    \caption{\textbf{Comparison of t-SNE Visualizations.} Compared to SCR, our proposed method yields more compact intra-class clusters and greater inter-class distances across different modalities. The dashed boxes highlight the changes in the intra-class and inter-class clusters for the corresponding IDs.}
    \label{fig::tsne}
    
    % \vspace{-0.1in} 
\end{figure}

\begin{figure}[t]
    \centering % 使用标准居中命令
    % 调整宽度至 1.0\linewidth 以最大化利用栏宽，保证内部字体清晰
    % 如果图片周围留白较多，请先在绘图软件中裁切 (Crop)
    \includegraphics[width=0.9\linewidth]{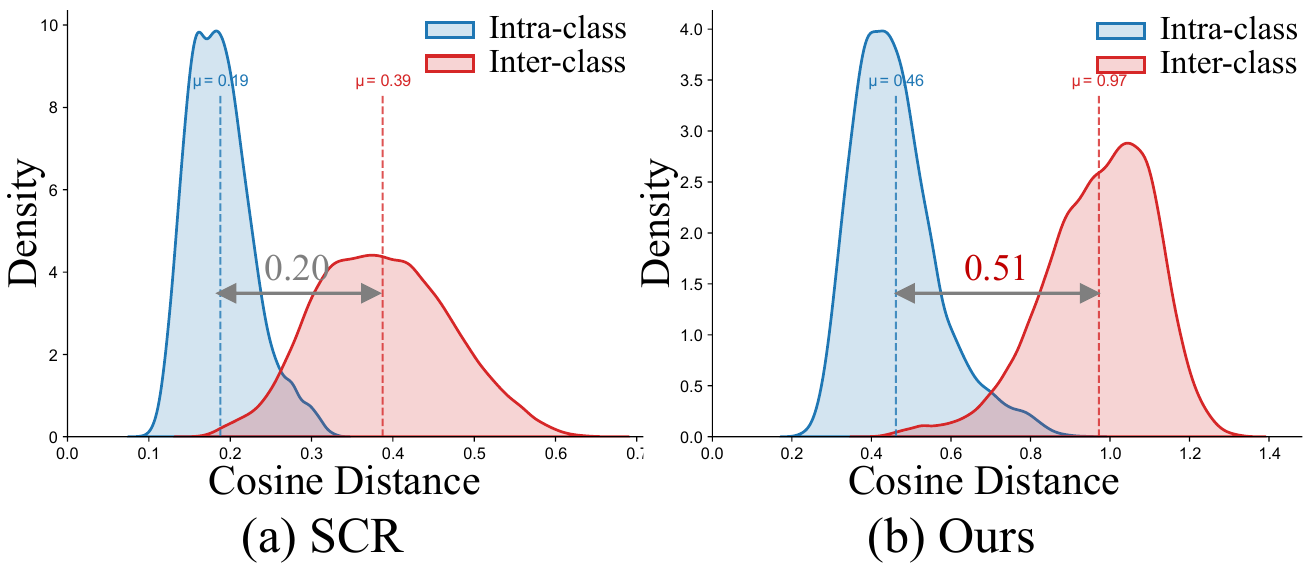}
    
    \vspace{-0.05in} % 微调图片与 Caption 的间距 (根据实际视觉效果调整)
    
    \caption{\textbf{Visualization of intra-class and inter-class cosine distances.} Compared with SCR, our method exhibits a larger inter–intra distance gap, indicating stronger separability.}
    \label{fig::distribution}
    
    % \vspace{-0.2in}
\end{figure}

\textbf{Main results on SUSTech1K (3D$\to$2D).}
In the reverse retrieval direction (LiDAR$\to$Camera), Tab.~\ref{tab::compare_SUStech1K_3d2d} shows an even larger advantage. DiffCrossGait exceeds the best baseline by +6.1 Rank-1 and +3.6 Rank-5, with substantial gains on conditions that induce large intra-class variance: Normal (+11.3), Bag (+11.4), Uniform (+11.5), Carrying (+9.3), Umbrella (+8.2), and Clothing (+8.5) in Rank-1. On the SUSTech1K test set, we qualitatively and quantitatively highlight the cross-modal recognition capability of DiffCrossGait via visualization~\cite{van2008visualizing-EXP-JMLR2008-Tsne}, as shown in Fig. ~\ref{fig::tsne} and Fig. ~\ref{fig::distribution}, respectively.

% \paragraph{Detailed covariate breakdown (3D$\to$2D).}DiffCrossGait dominates \emph{seven} out of eight covariates, with substantial gains on conditions that induce large intra-class variance: \textbf{Normal} (+11.3), \textbf{Bag} (+11.4), \textbf{Uniform} (+11.5), \textbf{Carrying} (+9.3), \textbf{Umbrella} (+8.2), and \textbf{Clothing} (+8.5) in Rank-1. These improvements suggest that aligning intermediate bridging states (rather than endpoints alone) is effective at suppressing covariate-specific “shortcut” cues that often emerge in discriminative-only training. The remaining challenging case is \textbf{Night}, where DiffCrossGait underperforms CrossGait (8.1 vs.\ 10.3, \textbf{-2.2} Rank-1). A plausible explanation is that, in this direction, the gallery side contains degraded camera observations under low illumination, and trajectory alignment cannot fully compensate when the 2D modality provides weak or noisy identity semantics; in contrast, baselines that primarily emphasize cross-modal metric structure may be less affected by such failure modes. Importantly, this limitation is localized: the overall performance remains the best, and the method retains strong robustness across the other covariates.

\textbf{Generalization to real-world scenarios (FreeGait).}
Tab.~\ref{tab::compare_FreeGait} demonstrates that DiffCrossGait generalizes strongly to FreeGait, which is collected in unconstrained environments with complex illumination and unstructured occlusions. DiffCrossGait achieves more superior results compared to the strongest baseline SCR \cite{yu2025no-Exp-IF2025-VIReID-SCR}. Such gains are difficult to achieve via static alignment alone, supporting the hypothesis that diffusion-trajectory constraints encourage modality-invariant structures that are less tied to dataset-specific sensor artifacts, leading to improved out-of-distribution robustness.

\subsection{Efficiency and Practicality Considerations}
A key design choice in DiffCrossGait is that diffusion is used only as a training-time objective; inference relies solely on the discriminative backbone. As shown in Tab.~\ref{tab::complexity}, DiffCrossGait retains essentially the same inference footprint as the baseline backbone, while delivering a large accuracy gain. In contrast, stronger discriminative baselines such as IDKL and SCR incur higher parameter counts and slower inference. This suggests that the performance gains of DiffCrossGait are not achieved by scaling inference-time computation, but rather by enhancing training-time cross-modal alignment through a lightweight auxiliary diffusion branch and trajectory-level constraints, which is attractive for deployment-oriented gait recognition systems. During training, the auxiliary diffusion branch adds 2.3M parameters and increases the training time from 8.5h to 11.2h on a single RTX 3090; however, this branch is discarded after training and introduces negligible inference overhead.

\subsection{Ablation Study}
% We conduct ablations on SUSTech1K under the same evaluation protocol as the main experiments, reporting Rank-1/Rank-5 accuracy.

\textbf{Necessity of the unified trajectory design and lightweight components.}
Tab.~\ref{tab::ablation_components} isolates four key elements in DiffCrossGait. Removing \textbf{Shared Noise} drops performance, verifying that a synchronized stochastic driver is essential for process alignment. Without shared $\epsilon$, the two modalities experience unrelated perturbations, and the denoising objective can be satisfied independently per modality, weakening the cross-modal constraint on the trajectory geometry. We further observe that the \textbf{Shared Modal Token} is the most critical architectural factor: removing it reduces accuracy. In our framework, the shared modal token also operationalizes the conditioning design used by SelfCond/CrossCond routing: it enables the same denoiser to switch between self-driven restoration and cross-driven recoverability, which is difficult to emulate with identity priors alone. The \textbf{Decoupled Heads} and \textbf{Axial Attention} each provide consistent gains. When disabling head decoupling, performance decreases, supporting the motivation that separating discriminative and generative projections reduces optimization interference. Removing axial attention degrades the accuracy, consistent with the interpretation that axial attention injects an efficient global dependency bias into the bottleneck state, improving the controllability of intermediate representations that are explicitly aligned across modalities.

\textbf{Impact of the Tri-Phase Alignment Strategy.}
Tab.~\ref{tab::ablation_phases} validates the proposed curriculum over noise regimes. Adding only \textbf{Phase1} improves the baseline performance, showing that anchoring near clean latents effectively prevents semantic drift in the diffusion branch and preserves discriminative integrity. Introducing \textbf{Phase2} on top of Phase1 further increases performance, suggesting that aligning denoising dynamics (noise prediction and bottleneck bridging states) provides a stronger cross-modal coupling signal than identity supervision alone once the signal-to-noise ratio decreases. \textbf{Phase3} complements the above by enforcing cross-modal structural recoverability under large noise. While Phase1+3 (62.5/82.6) and Phase2+3 (61.8/82.0) already outperform the baseline substantially, the full strategy achieves the best result (63.8/83.4), improving over Phase1+2 by an additional +0.7 Rank-1. This pattern supports the intended semantics of the three regimes: Phase1 maintains class-discriminative structure locally; Phase2 synchronizes the trajectory’s evolution in the mid-noise region, where direct identity losses can be overly restrictive; and Phase3 imposes a global prior that narrows the modality gap when both latents are close to a Gaussian and recovery must rely on structured conditions rather than modality-specific cues.

\textbf{Ablation of the denoising U-Net design.}
Tab.~\ref{tab::ablation_unet} shows that our denoiser should be shared and lightweight. Using independent lightweight U-Nets slightly below the shared lightweight design while increasing training cost to $\sim$1.2$\times$ parameters and $\sim$1.1$\times$ time. This indicates that parameter sharing itself acts as an implicit alignment regularizer: it constrains both modalities to admit compatible denoising transformations under the same function class, reinforcing the notion of a unified trajectory. Conversely, a shared heavyweight U-Net substantially degrades performance despite requiring $\sim$2.6$\times$ parameters and $\sim$1.7$\times$ training time. A plausible explanation is that excessive denoising capacity allows the diffusion branch to “explain away” modality-specific artifacts independently, thereby weakening the pressure to learn modality-invariant structure in the bottleneck states and along the trajectory. 
% In contrast, the lightweight shared denoiser, together with structured conditioning, better matches our objective: diffusion is not used for high-fidelity generation, but as a targeted training-time constraint to shape the discriminative backbone via dynamic process alignment, while keeping the overall system efficient at inference time.

% \section{Limitation and Future Works}
%对于跨域跨模态的分布如何对齐，比如夜晚条件下。
%如何利用大语言模型进行跨模态的步态识别的可解释性研究。

\subsection{Sensitivity, Generalizability, and Diagnostics}

\textbf{Sensitivity.} We evaluate the stability of DiffCrossGait under different loss weights, phase boundaries, and diffusion lengths. As shown in Tab.~\ref{tab:sensitivity}, the performance changes mildly around our default setting, indicating that the gain is not caused by a fragile hyperparameter choice. The same loss weights are used on both SUSTech1K~\cite{shen2023lidargait-GR-CVPR2023-3DDepth-SUSTech1K} and FreeGait~\cite{han2024gait-GR-MM2024-3DDepth-Freegait-HMRGait}.

\begin{table}[t]
\centering
\caption{Sensitivity analysis on SUSTech1K. Loss-weight stability is evaluated under the stronger reverse retrieval setting (3D$\rightarrow$2D), while schedule robustness is evaluated under 2D$\rightarrow$3D by varying phase boundaries and diffusion length.}
\label{tab:sensitivity}
\resizebox{\columnwidth}{!}{
\begin{tabular}{lcc}
\toprule
Diagnostic & Setting & Rank-1 \\
\midrule
Loss weights $(\lambda_2,\lambda_3,\lambda_5)$
& $(0.1,0.1,0.05)$ & 62.1 \\
& $(1.0,0.1,0.05)$ & 63.3 \\
& $(0.5,0.05,0.05)$ & 62.9 \\
& $(0.5,0.2,0.1)$ & 63.1 \\
& $(0.5,0.1,0.05)$ & \textbf{63.8} \\
\midrule
$\rho_a$ with $\rho_b=0.50$
& 0.90 / 0.95 / 0.98 & 58.3 / \textbf{58.7} / 58.5 \\
$\rho_b$ with $\rho_a=0.95$
& 0.40 / 0.50 / 0.60 & 58.1 / \textbf{58.7} / 58.4 \\
Diffusion length $T$
& 50 / 100 / 200 & 58.2 / \textbf{58.7} / 58.6 \\
\bottomrule
\end{tabular}}
\end{table}

\textbf{Generalizability.}
To examine whether the diffusion objective is tied to a specific gait backbone, we further apply it to two visible-infrared ReID baselines on SYSU-MM01~\cite{wang2019rgb-Exp-ICCV2019-VIReID-SYSUMM01} under the all-search single-shot protocol. DEEN~\cite{zhang2023diverse-CVPR2023-DEEN-Ablation} improves from 74.70/71.80 to 75.19/72.01 in Rank-1/mAP, and IDKL~\cite{ren2024implicit-VIReID-CVPR2024-IDKL} improves from 81.42/79.85 to 81.91/80.11. Although modest, these consistent gains suggest that trajectory-level latent regularization can be used as a plug-and-play objective for heterogeneous representation learning.

\textbf{Trajectory diagnostic.}
We also measure the gap between predicted clean states, $\|\hat{z}^{2d}_{0,t}-\hat{z}^{3d}_{0,t}\|_2$, which better reflects learned alignment than directly comparing noisy latents. DiffCrossGait reduces the endpoint gap from 0.23 to 0.12 and keeps the trajectory gap controlled at $t{=}10,50,90$ as 0.18, 0.31, and 0.63, whereas a Phase1-only variant increases from 0.34 at $t{=}10$ to 1.74 at $t{=}90$. This indicates that the multi-phase design actively suppresses cross-modal divergence under severe noise.

\subsection{Failure Cases and Limitations}
DiffCrossGait can still fail under extreme information scarcity, e.g., when nighttime degradation and heavy occlusion or carrying occur simultaneously. In such cases, both modalities may lose essential identity-related structural anchors, making the shared clean latent state difficult to recover. Another limitation comes from the axial attention prior: it assumes an approximately vertical head-to-toe body layout, which is suitable for ground-level horizontal viewpoints but may break under extreme pitch angles such as top-down UAV scenarios. Qualitative retrieval comparisons with SCR and representative failure cases are provided in Appendix~\ref{app:qualitative_retrieval} and Appendix~\ref{app:failure_cases}, respectively.

\section{Conclusion}
We propose DiffCrossGait, a diffusion-based trajectory alignment framework for 2D–3D cross-modal gait recognition that couples Camera and LiDAR features with shared Gaussian noise in a unified latent diffusion process. A Tri-Phase Alignment Strategy enforces complementary constraints across noise regimes—identity anchoring, denoising/bottleneck consistency, and cross-modal recoverability via cross-conditioning—yielding modality-invariant representations. Experiments demonstrate consistent state-of-the-art performance, and these results suggest that shared-noise diffusion is most effective as a training-time identity-subspace regularizer when both modalities retain sufficient gait-related structural evidence.

% Acknowledgements should only appear in the accepted version.
% \section*{Acknowledgements}

\section*{Impact Statement}
This work advances cross-modal gait recognition, a remote biometric technology that may support identification when face or fingerprint signals are unavailable. However, such systems also raise risks of mass surveillance, non-consensual tracking, privacy infringement, and biased deployment across environments or populations. DiffCrossGait is a representation-learning method and does not by itself mitigate these risks. Any deployment should require legal authorization, appropriate consent, strict access control, auditable data governance, and evaluation under the target distribution. Although our diffusion objective focuses on structural gait dynamics rather than appearance textures, it should not be interpreted as a privacy-preserving guarantee.

% In the unusual situation where you want a paper to appear in the
% references without citing it in the main text, use \nocite
% \nocite{langley00}

% \newpage
\bibliography{example_paper}

@article{yu2025clip-vlmkd-VIReID,
  title={Clip-driven semantic discovery network for visible-infrared person re-identification},
  author={Yu, Xiaoyan and Dong, Neng and Zhu, Liehuang and Peng, Hao and Tao, Dapeng},
  journal={IEEE Transactions on Multimedia},
  year={2025},
  publisher={IEEE}
}

@article{li2025video-vlmkd-VIReID,
  title={Video-Level Language-Driven Video-Based Visible-Infrared Person Re-Identification},
  author={Li, Shuang and Leng, Jiaxu and Kuang, Changjiang and Tan, Mingpi and Gao, Xinbo},
  journal={IEEE Transactions on Information Forensics and Security},
  year={2025},
  publisher={IEEE}
}

@article{han-2005-GR-TPAMI-gaitenergy-2DAppearance,
  title={Individual recognition using gait energy image},
  author={Han, Jinguang and Bhanu, Bir},
  journal={IEEE Transactions on Pattern Analysis and Machine Intelligence},
  volume={28},
  number={2},
  pages={316--322},
  year={2005},
  publisher={IEEE}
}

@article{li2022strong-GR-TMM2022-2DSkeleton,
  title={A strong and robust skeleton-based gait recognition method with gait periodicity priors},
  author={Li, Na and Zhao, Xinbo},
  journal={IEEE Transactions on Multimedia},
  volume={25},
  pages={3046--3058},
  year={2022},
  publisher={IEEE}
}

@inproceedings{chao2019gaitset-GR-AAAI2019-2DAppearance,
  title={Gaitset: Regarding gait as a set for cross-view gait recognition},
  author={Chao, Hanqing and He, Yiwei and Zhang, Junping and Feng, Jianfeng},
  booktitle={Proceedings of the AAAI Conference on Artificial Intelligence},
  volume={33},
  number={01},
  pages={8126--8133},
  year={2019}
}

@article{chao2021gaitset-GR-TPAMI2021-2DAppearance-HPP,
  title={GaitSet: Cross-view gait recognition through utilizing gait as a deep set},
  author={Chao, Hanqing and Wang, Kun and He, Yiwei and Zhang, Junping and Feng, Jianfeng},
  journal={IEEE Transactions on Pattern Analysis and Machine Intelligence},
  volume={44},
  number={7},
  pages={3467--3478},
  year={2021},
  publisher={IEEE}
}

@inproceedings{fan2023opengait-GR-CVPR2023-OpenGait,
  title={Opengait: Revisiting gait recognition towards better practicality},
  author={Fan, Chao and Liang, Junhao and Shen, Chuanfu and Hou, Saihui and Huang, Yongzhen and Yu, Shiqi},
  booktitle={Proceedings of the IEEE/CVF Conference on Computer Vision and Pattern Recognition},
  pages={9707--9716},
  year={2023}
}

@inproceedings{fan2020gaitpart-GR-CVPR2020-2DAppearance,
  title={Gaitpart: Temporal part-based model for gait recognition},
  author={Fan, Chao and Peng, Yunjie and Cao, Chunshui and Liu, Xu and Hou, Saihui and Chi, Jiannan and Huang, Yongzhen and Li, Qing and He, Zhiqiang},
  booktitle={Proceedings of the IEEE/CVF Conference on Computer Vision and Pattern Recognition},
  pages={14225--14233},
  year={2020}
}

@inproceedings{huang2021context-GR-ICCV2021-2DAppearance,
  title={Context-sensitive temporal feature learning for gait recognition},
  author={Huang, Xiaohu and Zhu, Duowang and Wang, Hao and Wang, Xinggang and Yang, Bo and He, Botao and Liu, Wenyu and Feng, Bin},
  booktitle={Proceedings of the IEEE/CVF International Conference on Computer Vision},
  pages={12909--12918},
  year={2021}
}

@inproceedings{guo2025camera-CMGR-ECCV2025-CLGait,
  title={Camera-LiDAR Cross-modality Gait Recognition},
  author={Guo, Wenxuan and Liang, Yingping and Pan, Zhiyu and Xi, Ziheng and Feng, Jianjiang and Zhou, Jie},
  booktitle={European Conference on Computer Vision},
  pages={439--455},
  year={2025},
  organization={Springer}
}

@inproceedings{wang2024cross-CMGR-IJCB2024-CrossGait,
  title={Cross-Modality Gait Recognition: Bridging LiDAR and Camera Modalities for Human Identification},
  author={Wang, Rui and Shen, Chuanfu and Marin-Jimenez, Manuel J and Huang, George Q and Yu, Shiqi},
  booktitle={IEEE International Joint Conference on Biometrics},
  pages={1--11},
  year={2024},
  organization={IEEE}
}

@inproceedings{zheng2022gait-GR-CVPR2022-3D,
  title={Gait recognition in the wild with dense 3d representations and a benchmark},
  author={Zheng, Jinkai and Liu, Xinchen and Liu, Wu and He, Lingxiao and Yan, Chenggang and Mei, Tao},
  booktitle={Proceedings of the IEEE/CVF Conference on Computer Vision and Pattern Recognition},
  pages={20228--20237},
  year={2022}
}

@inproceedings{shen2023lidargait-GR-CVPR2023-3DDepth-SUSTech1K,
  title={Lidargait: Benchmarking 3d gait recognition with point clouds},
  author={Shen, Chuanfu and Fan, Chao and Wu, Wei and Wang, Rui and Huang, George Q and Yu, Shiqi},
  booktitle={Proceedings of the IEEE/CVF Conference on Computer Vision and Pattern Recognition},
  pages={1054--1063},
  year={2023}
}

@inproceedings{han2024gait-GR-MM2024-3DDepth-Freegait-HMRGait,
  title={Gait Recognition in Large-scale Free Environment via Single LiDAR},
  author={Han, Xiao and Ren, Yiming and Cong, Peishan and Sun, Yujing and Wang, Jingya and Xu, Lan and Ma, Yuexin},
  booktitle={Proceedings of the ACM International Conference on Multimedia},
  pages={380--389},
  year={2024}
}

@article{shen2024comprehensive-GR-TBBIS2024-Survey,
  title={A Comprehensive Survey on Deep Gait Recognition: Algorithms, Datasets, and Challenges},
  author={Shen, Chuanfu and Yu, Shiqi and Wang, Jilong and Huang, George Q and Wang, Liang},
  journal={IEEE Transactions on Biometrics, Behavior, and Identity Science},
  year={2024},
  publisher={IEEE}
}

@article{sarkar2005humanid-GR-TPAMI2005-GaitOri,
  title={The humanid gait challenge problem: Data sets, performance, and analysis},
  author={Sarkar, Sudeep and Phillips, P Jonathon and Liu, Zongyi and Vega, Isidro Robledo and Grother, Patrick and Bowyer, Kevin W},
  journal={IEEE Transactions on Pattern Analysis and Machine Intelligence},
  volume={27},
  number={2},
  pages={162--177},
  year={2005},
  publisher={IEEE}
}

@inproceedings{huang20213d-GR-ICCV2021-2DAppearance,
  title={3d local convolutional neural networks for gait recognition},
  author={Huang, Zhen and Xue, Dixiu and Shen, Xu and Tian, Xinmei and Li, Houqiang and Huang, Jianqiang and Hua, Xian-Sheng},
  booktitle={Proceedings of the IEEE/CVF International Conference on Computer Vision},
  pages={14920--14929},
  year={2021}
}

@article{van2008visualizing-EXP-JMLR2008-Tsne,
  title={Visualizing data using t-SNE.},
  author={Van der Maaten, Laurens and Hinton, Geoffrey},
  journal={Journal of Machine Learning Research},
  volume={9},
  number={11},
  year={2008}
}

@article{chen2024egst-GR-TIFS-Event,
  title={EGST: An Efficient Solution for Human Gaits Recognition Using Neuromorphic Vision Sensor},
  author={Chen, Liaogehao and Zhang, Zhenjun and Xiao, Yang and Wang, Yaonan},
  journal={IEEE Transactions on Information Forensics and Security},
  year={2024},
  publisher={IEEE}
}

@inproceedings{ma2024learning-GR-CVPR2024-2DAppearance-VPNet,
  title={Learning visual prompt for gait recognition},
  author={Ma, Kang and Fu, Ying and Cao, Chunshui and Hou, Saihui and Huang, Yongzhen and Zheng, Dezhi},
  booktitle={Proceedings of the IEEE/CVF Conference on Computer Vision and Pattern Recognition},
  pages={593--603},
  year={2024}
}

@inproceedings{jin2025denoisinggait-GR-CVPR2025-2DAppearance-DenoisingGait,
  title={On Denoising Walking Videos for Gait Recognition},
  author={Jin, Dongyang and Fan, Chao and Ma, Jingzhe and Zhou, Jingkai and Chen, Weihua and Yu, Shiqi},
  booktitle={Proceedings of the IEEE/CVF Conference on Computer Vision and Pattern Recognition},
  pages={12347--12357},
  year={2025}
}

@inproceedings{yang2025bridging-GR-CVPR2025-2DAppearance-GaitLLM,
  title={Bridging gait recognition and large language models sequence modeling},
  author={Yang, Shaopeng and Wang, Jilong and Hou, Saihui and Liu, Xu and Cao, Chunshui and Wang, Liang and Huang, Yongzhen},
  booktitle={Proceedings of the IEEE/CVF Conference on Computer Vision and Pattern Recognition},
  pages={3460--3469},
  year={2025}
}

@article{hu2024empowering-VIReID-NeurIPS2024-TVILFM,
  title={Empowering visible-infrared person re-identification with large foundation models},
  author={Hu, Zhangyi and Yang, Bin and Ye, Mang},
  journal={Advances in Neural Information Processing Systems},
  volume={37},
  pages={117363--117387},
  year={2024}
}

@inproceedings{ren2024implicit-VIReID-CVPR2024-IDKL,
  title={Implicit discriminative knowledge learning for visible-infrared person re-identification},
  author={Ren, Kaijie and Zhang, Lei},
  booktitle={Proceedings of the IEEE/CVF Conference on Computer Vision and Pattern Recognition},
  pages={393--402},
  year={2024}
}

@article{fan2025opengait-GR-TPAMI2025-OpenGait,
  title={OpenGait: A Comprehensive Benchmark Study for Gait Recognition towards Better Practicality},
  author={Fan, Chao and Hou, Saihui and Liang, Junhao and Shen, Chuanfu and Ma, Jingzhe and Jin, Dongyang and Huang, Yongzhen and Yu, Shiqi},
  journal={IEEE Transactions on Pattern Analysis and Machine Intelligence},
  year={2025},
  publisher={IEEE}
}

@inproceedings{fu2023gpgait-GR-ICCV2023-2DModel,
  title={Gpgait: Generalized pose-based gait recognition},
  author={Fu, Yang and Meng, Shibei and Hou, Saihui and Hu, Xuecai and Huang, Yongzhen},
  booktitle={Proceedings of the IEEE/CVF International Conference on Computer Vision},
  pages={19595--19604},
  year={2023}
}

@article{wang2025gaitc-GR-TCSVT2025-2DAppearance-GaitC3,
  title={GaitC3: Robust cross-covariate gait recognition via causal intervention},
  author={Wang, Jilong and Hou, Saihui and Guo, Xianda and Huang, Yan and Huang, Yongzhen and Zhang, Tianzhu and Wang, Liang},
  journal={IEEE Transactions on Circuits and Systems for Video Technology},
  year={2025},
  publisher={IEEE}
}

@inproceedings{shen2025lidargait++-GR-CVPR2025-3DPC,
  title={LidarGait++: Learning Local Features and Size Awareness from LiDAR Point Clouds for 3D Gait Recognition},
  author={Shen, Chuanfu and Wang, Rui and Duan, Lixin and Yu, Shiqi},
  booktitle={Proceedings of the Computer Vision and Pattern Recognition Conference},
  pages={6627--6636},
  year={2025}
}

@article{lu2025mojo-GR-TIFS2025-3DPC,
  title={MOJO: MOtion Pattern Learning and JOint-based Fine-grained Mining for Person Re-identification Based on 4D LiDAR Point Clouds},
  author={Lu, Zhiyang and Wen, Chenglu and Cheng, Ming and Wang, Cheng},
  journal={IEEE Transactions on Information Forensics and Security},
  year={2025},
  publisher={IEEE}
}

@article{zuo2024cross-Intro-NeurIPS2024-Discrimination,
  title={Cross-video identity correlating for person re-identification pre-training},
  author={Zuo, Jialong and Nie, Ying and Zhou, Hanyu and Zhang, Huaxin and Wang, Haoyu and Guo, Tianyu and Sang, Nong and Gao, Changxin},
  journal={Advances in Neural Information Processing Systems},
  volume={37},
  pages={25228--25250},
  year={2024}
}

@article{filipi2022gait-Intro-CS2022-GaitSurvey,
  title={Gait recognition based on deep learning: a survey},
  author={Filipi Gon{\c{c}}alves dos Santos, Claudio and Oliveira, Diego De Souza and A. Passos, Leandro and Gon{\c{c}}alves Pires, Rafael and Felipe Silva Santos, Daniel and Pascotti Valem, Lucas and P. Moreira, Thierry and Cleison S. Santana, Marcos and Roder, Mateus and Paulo Papa, Jo and others},
  journal={ACM Computing Surveys},
  volume={55},
  number={2},
  pages={1--34},
  year={2022},
  publisher={Acm New York, NY}
}

@inproceedings{zhang2025dream-Intro-AAAI2025-CrossmodalGaps,
  title={DREAM: Decoupled Discriminative Learning with Bigraph-aware Alignment for Semi-supervised 2D-3D Cross-modal Retrieval},
  author={Zhang, Fan and Wang, Changhu and Cheng, Zebang and Peng, Xiaojiang and Wang, Dongjie and Xiao, Yijia and Chen, Chong and Hua, Xian-Sheng and Luo, Xiao},
  booktitle={Proceedings of the AAAI Conference on Artificial Intelligence},
  volume={39},
  number={12},
  pages={13206--13214},
  year={2025}
}

@inproceedings{zhang2024fine-Intro-CVPR2024-CrossmodalGaps,
  title={Fine-grained prototypical voting with heterogeneous mixup for semi-supervised 2d-3d cross-modal retrieval},
  author={Zhang, Fan and Hua, Xian-Sheng and Chen, Chong and Luo, Xiao},
  booktitle={Proceedings of the IEEE/CVF conference on computer vision and pattern recognition},
  pages={17016--17026},
  year={2024}
}

@inproceedings{ye2021channel-Exp-ICCV2021-CAJ,
  title={Channel augmented joint learning for visible-infrared recognition},
  author={Ye, Mang and Ruan, Weijian and Du, Bo and Shou, Mike Zheng},
  booktitle={Proceedings of the IEEE/CVF International Conference on Computer Vision},
  pages={13567--13576},
  year={2021}
}

@inproceedings{fang2023visible-Exp-ICCV2023-SAAI,
  title={Visible-infrared person re-identification via semantic alignment and affinity inference},
  author={Fang, Xingye and Yang, Yang and Fu, Ying},
  booktitle={Proceedings of the IEEE/CVF International Conference on Computer Vision},
  pages={11270--11279},
  year={2023}
}

@inproceedings{jiang2025laboratory-Exp-CVPR2025-VIReID-DPPT,
  title={From laboratory to real world: A new benchmark towards privacy-preserved visible-infrared person re-identification},
  author={Jiang, Yan and Yu, Hao and Cheng, Xu and Chen, Haoyu and Sun, Zhaodong and Zhao, Guoying},
  booktitle={Proceedings of the Computer Vision and Pattern Recognition Conference},
  pages={8828--8837},
  year={2025}
}

@article{shi2026two-Exp-PR2025-VIReID-TSKD,
  title={Two-stage knowledge distillation for visible-infrared person re-identification},
  author={Shi, Jiangming and Yin, Xiangbo and Zhang, Demao and Zhang, Zhizhong and Xie, Yuan and Qu, Yanyun},
  journal={Pattern Recognition},
  volume={169},
  pages={111850},
  year={2026},
  publisher={Elsevier}
}

@article{yu2025no-Exp-IF2025-VIReID-SCR,
  title={No escape: Towards suggestive clues guidance for cross-modality person re-identification},
  author={Yu, Mingxin and Ge, Yiyuan and Chen, Zhihao and You, Rui and Zhu, Lianqing and Lin, Mingwei and Xu, Zeshui},
  journal={Information Fusion},
  pages={103185},
  year={2025},
  publisher={Elsevier}
}

@inproceedings{wang2019rgb-Exp-ICCV2019-VIReID-SYSUMM01,
  title={RGB-infrared cross-modality person re-identification via joint pixel and feature alignment},
  author={Wang, Guan'an and Zhang, Tianzhu and Cheng, Jian and Liu, Si and Yang, Yang and Hou, Zengguang},
  booktitle={Proceedings of the IEEE/CVF international conference on computer vision},
  pages={3623--3632},
  year={2019}
}

@inproceedings{dai2025diffusionDiVE-AAAI2025-VIReID-Diffusion,
  title={Diffusion-based Synthetic Data Generation for Visible-Infrared Person Re-Identification},
  author={Dai, Wenbo and Lu, Lijing and Li, Zhihang},
  booktitle={Proceedings of the AAAI Conference on Artificial Intelligence},
  volume={39},
  number={11},
  pages={11185--11193},
  year={2025}
}

@article{yu2025identityIADiff-PR2025-VIReID-Diffusion,
  title={Identity-aware infrared person image generation and re-identification via controllable diffusion model},
  author={Yu, Xizhuo and Fan, Chaojie and Zhang, Zhizhong and Wang, Yongbo and Chen, Chunyang and Yu, Tianjian and Peng, Yong},
  journal={Pattern Recognition},
  volume={165},
  pages={111561},
  year={2025},
  publisher={Elsevier}
}

@inproceedings{yang2024crossContextDiff-ICLR2024-FeatureAlignment-Diffusion,
  title={Cross-modal contextualized diffusion models for text-guided visual generation and editing},
  author={Yang, Ling and Zhang, Zhilong and Yu, Zhaochen and Liu, Jingwei and Xu, Minkai and Ermon, Stefano and Cui, Bin},
  booktitle={International Conference on Learning Representations},
  year={2024}
}

@inproceedings{qiu2024aligndiff-ECCV2024-FeatureAlignment-Diffusion,
  title={Aligndiff: aligning diffusion models for general few-shot segmentation},
  author={Qiu, Ri-Zhao and Wang, Yu-Xiong and Hauser, Kris},
  booktitle={European Conference on Computer Vision},
  pages={384--400},
  year={2024},
  organization={Springer}
}

@inproceedings{wang2025GaitX-ICCV2025-GR,
  title={Gait-X: Exploring X modality for Generalized Gait Recognition},
  author={Wang, Zengbin and Hou, Saihui and Li, Junjie and Liu, Xu and Cao, Chunshui and Huang, Yongzhen and Wang, Siye and Zhang, Man},
  booktitle={Proceedings of the IEEE/CVF International Conference on Computer Vision},
  pages={13259--13269},
  year={2025}
}

@inproceedings{huang2025learningOrigins-ICCV2025-GR-Diffusion,
  title={Learning a unified template for gait recognition},
  author={Huang, Panjian and Hou, Saihui and Huang, Junzhou and Huang, Yongzhen},
  booktitle={Proceedings of the IEEE/CVF International Conference on Computer Vision},
  pages={12459--12469},
  year={2025}
}

@inproceedings{song2020denoising-DDIM-Arxiv2020-Diffusion,
  title = {Denoising Diffusion Implicit Models},
  author = {Song, Jiaming and Meng, Chenlin and Ermon, Stefano},
  booktitle = {International Conference on Learning Representations},
  year = {2021}
}

@inproceedings{rombach2022high-LDM-CVPR2020-Diffusion,
  title={High-resolution image synthesis with latent diffusion models},
  author={Rombach, Robin and Blattmann, Andreas and Lorenz, Dominik and Esser, Patrick and Ommer, Bj{\"o}rn},
  booktitle={Proceedings of the IEEE/CVF Conference on Computer Vision and Pattern Recognition},
  pages={10684--10695},
  year={2022}
}

@article{ben2010theory-MachineLearning2010-DomainAdaptation-Appendix,
  title={A theory of learning from different domains},
  author={Ben-David, Shai and Blitzer, John and Crammer, Koby and Kulesza, Alex and Pereira, Fernando and Vaughan, Jennifer Wortman},
  journal={Machine learning},
  volume={79},
  number={1},
  pages={151--175},
  year={2010},
  publisher={Springer}
}

@inproceedings{song2020score-Arxiv2020-SDE-Appendix,
  title = {Score-Based Generative Modeling through Stochastic Differential Equations},
  author = {Song, Yang and Sohl-Dickstein, Jascha and Kingma, Diederik P. and Kumar, Abhishek and Ermon, Stefano and Poole, Ben},
  booktitle = {International Conference on Learning Representations},
  year = {2021}
}

@article{vincent2011connection-Neural2011-ScoreMatching-Appendix,
  title={A connection between score matching and denoising autoencoders},
  author={Vincent, Pascal},
  journal={Neural computation},
  volume={23},
  number={7},
  pages={1661--1674},
  year={2011},
  publisher={MIT Press}
}

@inproceedings{zhang2023diverse-CVPR2023-DEEN-Ablation,
  title={Diverse embedding expansion network and low-light cross-modality benchmark for visible-infrared person re-identification},
  author={Zhang, Yukang and Wang, Hanzi},
  booktitle={Proceedings of the IEEE/CVF conference on Computer Vision and Pattern Recognition},
  pages={2153--2162},
  year={2023}
}
\bibliographystyle{icml2026}

%%%%%%%%%%%%%%%%%%%%%%%%%%%%%%%%%%%%%%%%%%%%%%%%%%%%%%%%%%%%%%%%%%%%%%%%%%%%%%%
%%%%%%%%%%%%%%%%%%%%%%%%%%%%%%%%%%%%%%%%%%%%%%%%%%%%%%%%%%%%%%%%%%%%%%%%%%%%%%%
% APPENDIX
%%%%%%%%%%%%%%%%%%%%%%%%%%%%%%%%%%%%%%%%%%%%%%%%%%%%%%%%%%%%%%%%%%%%%%%%%%%%%%%
%%%%%%%%%%%%%%%%%%%%%%%%%%%%%%%%%%%%%%%%%%%%%%%%%%%%%%%%%%%%%%%%%%%%%%%%%%%%%%%
\newpage
\appendix
\onecolumn
\section{Appendix}

% You can have as much text here as you want. The main body must be at most $8$
% pages long. For the final version, one more page can be added. If you want, you
% can use an appendix like this one.

% The $\mathtt{\backslash onecolumn}$ command above can be kept in place if you
% prefer a one-column appendix, or can be removed if you prefer a two-column
% appendix.  Apart from this possible change, the style (font size, spacing,
% margins, page numbering, etc.) should be kept the same as the main body.
%%%%%%%%%%%%%%%%%%%%%%%%%%%%%%%%%%%%%%%%%%%%%%%%%%%%%%%%%%%%%%%%%%%%%%%%%%%%%%%
%%%%%%%%%%%%%%%%%%%%%%%%%%%%%%%%%%%%%%%%%%%%%%%%%%%%%%%%%%%%%%%%%%%%%%%%%%%%%%%

\subsection{Datasets and Evaluation Protocols}
\label{ap::datasets}
We evaluate our proposed framework on two widely used cross-modal LiDAR-Camera datasets: SUSTech1K~\cite{shen2023lidargait-GR-CVPR2023-3DDepth-SUSTech1K} and FreeGait~\cite{han2024gait-GR-MM2024-3DDepth-Freegait-HMRGait}. SUSTech1K comprises 25,239 sequences from 1,050 subjects recorded from 12 distinct views. To simulate diverse gait recognition scenarios, various covariates are incorporated, including clothing changes, occlusions, carrying conditions, and illumination variations. Following standard protocols~\cite{wang2024cross-CMGR-IJCB2024-CrossGait,guo2025camera-CMGR-ECCV2025-CLGait,shen2025lidargait++-GR-CVPR2025-3DPC,shen2023lidargait-GR-CVPR2023-3DDepth-SUSTech1K}, the dataset is partitioned into a training set of 250 identities (6,011 sequences) and a test set of 800 identities (19,228 sequences), providing a comprehensive benchmark for cross-modal validation.
FreeGait is a large-scale dataset collected in real-world wild scenarios within a range of 25 meters. It contains 11,921 sequences from 1,195 identities, capturing complex environmental factors such as dynamic illumination and unstructured occlusions. The dataset is split into 500 identities for training and 695 for testing, serving as a rigorous testbed for model robustness in practical applications~\cite{han2024gait-GR-MM2024-3DDepth-Freegait-HMRGait,shen2025lidargait++-GR-CVPR2025-3DPC}. For evaluation, we report the Rank-1 and Rank-5 accuracy metrics separately for both retrieval directions: Camera-to-LiDAR (2D$\to$3D) and LiDAR-to-Camera (3D$\to$2D). We also provide a detailed performance breakdown across different covariate conditions to assess the efficacy of our method under varying challenges.

\subsection{Implementation Details}
\label{ap::implementation}
Following prior methodologies~\cite{guo2025camera-CMGR-ECCV2025-CLGait,wang2024cross-CMGR-IJCB2024-CrossGait}, we utilize silhouettes and range views as the input representations for the 2D Camera and 3D LiDAR modalities, respectively. For the diffusion process, the total number of timesteps is set to $T=100$, with a timestep embedding dimension of 256. We employ a weight-shared U-Net as the denoising network for both modality branches to facilitate implicit alignment. The denoising network is a lightweight hybrid U-Net with four ResBlocks and one axial-attention bottleneck, operating over $16{\times}16$ to $4{\times}4$ feature scales with channel dimensions 512 and 256. The adaptive thresholds for our Tri-Phase Alignment Strategy are empirically set to $\rho_a = 0.95$ and $\rho_b = 0.5$. We utilize a TripletSampler to construct mini-batches. Each batch consists of 8 randomly sampled identities, with 8 sequences per identity, resulting in a total batch size of 64. For each sequence, we strictly sample a fixed length of 10 frames. The model is optimized using the Adam optimizer with an initial learning rate of $7 \times 10^{-4}$ and a weight decay of $5 \times 10^{-4}$. We employ a MultiStep learning rate scheduler, decaying the learning rate by a factor of 0.1 at 15,000 and 30,000 iterations, with a total training duration of 40,000 iterations. All experiments are implemented in PyTorch and conducted on a single NVIDIA RTX 3090 GPU.

\subsection{Input Representation Discussion}
\label{app:input_representation}
We follow the standard silhouette/range-view protocol because it balances structural compatibility and geometric fidelity. In a controlled experiment, converting LiDAR range views into pseudo-2D silhouettes reduces the baseline Rank-1 accuracy to 27.6\% for 2D$\rightarrow$3D and 28.2\% for 3D$\rightarrow$2D. Directly pairing raw sparse point clouds with 2D silhouettes further enlarges the modality gap, leading to 10.1\% and 11.9\% Rank-1, respectively. These results support using geometry-preserving range views and performing alignment in latent trajectory space.

\subsection{Theoretical Feasibility \& Reliability Analysis}
In this section, we provide a theoretical analysis of the proposed \textbf{DiffCrossGait} framework. We aim to rigorously demonstrate why the proposed \textbf{Shared-Noise Trajectory Alignment} provides a strictly stronger constraint for cross-modal manifold alignment compared to traditional \textbf{Static Endpoint Alignment}. We base our analysis on Stochastic Differential Equations (SDEs), Score Matching theory, and Domain Adaptation generalization bounds.

\subsubsection{Preliminaries and Notation}

Let $\mathcal{X}_{2d}$ and $\mathcal{X}_{3d}$ denote the input spaces for 2D silhouette sequences and 3D point cloud sequences, respectively. Let $f_{\theta}: \mathcal{X} \to \mathcal{Z}$ represent the backbone encoders, mapping inputs to a shared latent manifold $\mathcal{Z} \subseteq \mathbb{R}^d$. Let $p_{2d}(\mathbf{z})$ and $p_{3d}(\mathbf{z})$ denote the induced latent distributions for the two modalities.

The forward diffusion process is modeled as a discretized SDE. In the continuous limit, for a latent variable $\mathbf{z}(t)$, the forward SDE is given by:
\begin{equation}
    d\mathbf{z} = \mathbf{f}(\mathbf{z}, t)dt + g(t)d\mathbf{w}, \quad t \in [0, T]
\end{equation}
where $\mathbf{w}$ is a standard Wiener process (Brownian motion), $\mathbf{f}(\cdot)$ is the drift coefficient, and $g(t)$ is the diffusion coefficient. \textbf{DiffCrossGait} enforces that for paired samples $(\mathbf{x}_{2d}, \mathbf{x}_{3d})$, the noise realization $d\mathbf{w}$ (implemented as $\epsilon$ in discrete time) is \textit{identical}.

% -------------------------------------------------------------------------
\subsubsection{Optimization Stability via Lipschitz Regularization}

A core challenge in cross-modal learning (e.g., Triplet Loss) is the instability of the optimization landscape, often leading to mode collapse where embedding manifolds intersect only at specific points rather than aligning globally. We argue that the auxiliary diffusion objective acts as a Lipschitz regularizer.

\begin{definition}[Lipschitz Continuity of the Encoder]
    An encoder $f_\theta$ is $K$-Lipschitz if $\norm{f_\theta(x) - f_\theta(y)} \leq K \norm{x - y}$ for all $x, y$.
\end{definition}

\begin{theorem}[Smoothness of the Optimization Landscape]
    Let $\Ldiff(\mathbf{z})$ be the denoising score matching loss defined on the latent code $\mathbf{z}$. Assuming the denoising network $U_\theta$ approximates the score function $\nabla_\mathbf{z} \log p_t(\mathbf{z})$ with bounded error, minimizing $\Ldiff$ constrains the trace of the Hessian of the log-likelihood of the latent distribution, thereby bounding the local curvature of the optimization landscape.
\end{theorem}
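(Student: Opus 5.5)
The natural route is the classical identity between denoising score matching and explicit score matching (Vincent, 2011), combined with Hyvärinen's integration by parts. That produces a term involving $\trace(\nabla^2_{\mathbf{z}} \log p_t)$ whose magnitude is controlled by $\Ldiff$ plus the bounded approximation error.

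\textbf{Step 1: rewrite the loss as a score-matching objective.} I will write the noise-prediction loss in its score form. Since $z_t=\sqrt{\bar\alpha_t}z_0+\sqrt{1-\bar\alpha_t}\boldsymbol\epsilon$, the conditional score is
\[
\nabla_{z_t}\log p(z_t\mid z_0)=-\boldsymbol\epsilon/\sqrt{1-\bar\alpha_t},
\]
so with $s_\theta(z_t,t):=-\hat{\boldsymbol\epsilon}_t/\sqrt{1-\bar\alpha_t}$ we have
\[
\Ldiff=(1-\bar\alpha_t)\,\E\norm{s_\theta-\nabla\log p(z_t\mid z_0)}^2 .
\]
Vincent's identity then gives
\[
\Ldiff=(1-\bar\alpha_t)\big(\E_{p_t}\norm{s_\theta-\nabla\log p_t}^2+C_t\big),
\]
where $C_t$ is independent of $\theta$.

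\textbf{Step 2: integrate by parts to expose the Hessian trace.} Next I will expand the explicit score-matching term. Integration by parts, assuming $p_t$ and $s_\theta$ decay at infinity (automatic for $t>0$, since $p_t$ is a Gaussian convolution), yields
\[
\E_{p_t}\norm{s_\theta}^2+2\,\E_{p_t}\trace(\nabla s_\theta)+\E_{p_t}\norm{\nabla\log p_t}^2 .
\]
Applying the same identity with $s_\theta$ replaced by the true score gives
\[
\E_{p_t}\trace(\nabla^2\log p_t)=-\E_{p_t}\norm{\nabla\log p_t}^2 .
\]
This is the quantity that expresses the curvature of $\log p_t$.

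\textbf{Step 3: bound the curvature.} Let the assumed bounded error be $\E_{p_t}\norm{s_\theta-\nabla\log p_t}^2\le\delta$. By the triangle inequality,
\[
\E\norm{\nabla\log p_t}^2\le 2\,\E\norm{s_\theta}^2+2\delta .
\]
In addition, $\E\norm{s_\theta}^2$ is bounded through $\Ldiff$: the network output $\hat{\boldsymbol\epsilon}$ is controlled because $\Ldiff\ge \E\norm{\hat{\boldsymbol\epsilon}}^2-2\E\langle\hat{\boldsymbol\epsilon},\boldsymbol\epsilon\rangle+d$. Combining these gives
\[
\big|\E_{p_t}\trace(\nabla^2\log p_t)\big|\le \frac{c_1\Ldiff+c_2 d}{1-\bar\alpha_t}+2\delta ,
\]
which bounds the average local curvature.

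\textbf{Step 4: transfer to the shared trajectory.} Because the noise and $U_\theta$ are shared across modalities, the same bound holds for both $p_t^{2d}$ and $p_t^{3d}$. The Gaussian smoothing also gives a uniform bound $\norm{\nabla^2\log p_t}\preceq \frac{1}{1-\bar\alpha_t}\mathbf{I}$ from below on the negative side, via Tweedie's second-order formula, which tightens the statement pointwise.

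\textbf{Main obstacle.} The hard part is Step 3. DSM controls only the average of the Hessian trace, not a pointwise or operator-norm curvature. It also degenerates as $t\to0$, where $1-\bar\alpha_t\to0$. I would therefore restrict the claim to $t\ge t_0>0$, for instance the Phase 2/3 intervals, and state it as an expected-trace bound. Interpreting this bound as ``smoothness of the optimization landscape'' of the encoder is heuristic. If that link is needed, I would invoke a Lipschitz bound on $f_\theta$ (the Definition above) and the chain rule to pull the latent curvature back to input space.
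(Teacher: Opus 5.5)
There is a genuine gap. Your estimates in Steps 1--3 are essentially correct (for Step 3, use $\E\norm{\hat{\boldsymbol\epsilon}}^2\le 2\Ldiff+2d$), but the bound they produce says nothing about \emph{minimizing} $\Ldiff$. Worse, the exact relation hidden in your own Steps 1--2 shows the effect runs the opposite way.

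\emph{Vacuity.} Since $\nabla\log p_t(z_t)=-\E[\boldsymbol\epsilon\mid z_t]/\sqrt{1-\bar\alpha_t}$, conditional Jensen gives $\E_{p_t}\norm{\nabla\log p_t}^2\le d/(1-\bar\alpha_t)$. This holds for every latent law $p_0$, every network and every value of the loss; your Tweedie bound $\nabla^2\log p_t\succeq-\frac{1}{1-\bar\alpha_t}\mathbf{I}$ in Step 4 is the pointwise version of the same fact. Your estimate $(c_1\Ldiff+c_2 d)/(1-\bar\alpha_t)+2\delta$ has $c_2\ge 1$, so it never improves on this a priori bound. The curvature control comes from Gaussian smoothing, not from the objective.

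\emph{Direction.} The constant you discard in Step 1 is $C_t=d/(1-\bar\alpha_t)-\E_{p_t}\norm{\nabla\log p_t}^2$. It is independent of the denoiser but \emph{not} of the encoder producing $z_0=z_{gen}^m$, and the encoder is exactly what this regularizer is meant to shape. Inserting $C_t$ together with your Step 2 identity gives, per modality and in your normalization,
\begin{equation*}
\Ldiff=d+(1-\bar\alpha_t)\,\E_{p_t}\trace\big(\nabla^2\log p_t\big)+(1-\bar\alpha_t)\,\E_{p_t}\norm{s_\theta-\nabla\log p_t}^2 .
\end{equation*}
Hence $-\E_{p_t}\trace(\nabla^2\log p_t)\ge(d-\Ldiff)/(1-\bar\alpha_t)$ for any denoiser, with equality up to $\delta$ under your error assumption. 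Driving $\Ldiff\to 0$ therefore pushes the expected Hessian trace toward its most negative admissible value $-d/(1-\bar\alpha_t)$; a latent distribution collapsed to a point attains zero loss. Minimizing the loss thus \emph{lower}-bounds the curvature magnitude instead of capping it. For instance, take $p_0=\mathcal{N}(0,\sigma_0^2\mathbf{I})$ and $v=\bar\alpha_t\sigma_0^2+1-\bar\alpha_t$. The optimal loss is $d\bar\alpha_t\sigma_0^2/v$, while $\lvert\E_{p_t}\trace\nabla^2\log p_t\rvert=d/v$, and the two move in opposite directions as $\sigma_0$ varies.

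The paper's sketch does not supply the missing idea either. It uses the same Vincent link, writing $\Ldiff\propto\trace(\nabla^2\log p)+o(\sigma^2)$. That is a loose form of the identity above, with the trace entering with the same positive sign. It then asserts the spectral bound $\sup_{\mathbf{x}}\norm{J_{f_\theta}(\mathbf{x})}_2\le C$ on the encoder without derivation. Your fallback of invoking a Lipschitz bound on $f_\theta$ assumes that conclusion rather than proving it. Moreover, composing $\log p_t$ with $f_\theta$ gives a function of the input, not the curvature of the training loss in parameter space. What this route actually establishes is the exact identity above, together with the $\Ldiff$-independent bound $0\le-\E_{p_t}\trace(\nabla^2\log p_t)\le d/(1-\bar\alpha_t)$. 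The claim that minimizing $\Ldiff$ bounds the curvature does not follow.
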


\begin{proof}[Proof Sketch]
    The diffusion loss in DiffCrossGait can be viewed as learning the score function $s_\theta(\mathbf{z}, t) \approx \nabla_{\mathbf{z}} \log p_t(\mathbf{z})$. Vincent showed that Denoising Autoencoder objectives are equivalent to regularizing the Frobenius norm of the Jacobian of the mapping~\cite{vincent2011connection-Neural2011-ScoreMatching-Appendix}. Specifically, for a Gaussian kernel with variance $\sigma^2$:
    \begin{equation}
        \Ldiff \approx \E_{\mathbf{z}} \left[ \norm{\mathbf{z} - r(\mathbf{z})}^2 \right] \propto \trace\left( \nabla^2 \log p(\mathbf{z}) \right) + o(\sigma^2)
    \end{equation}
    where $r(\mathbf{z})$ is the reconstruction. By minimizing $\Ldiff$ on the latent codes $z_{2d}$ and $z_{3d}$, we explicitly penalize high-frequency variations in the latent density $p(\mathbf{z})$. This acts as a spectral normalizer on the Jacobian $J_{f_\theta} = \frac{\partial \mathbf{z}}{\partial \mathbf{x}}$, ensuring that:
    \begin{equation}
        \sup_{\mathbf{x}} \norm{J_{f_\theta}(\mathbf{x})}_2 \leq C
    \end{equation}
    This constraint prevents the encoder from learning ``shortcuts'' that map disparate 2D/3D inputs to the same point via highly distorted, non-smooth mappings, thus ensuring a smoother convergence basin for the discriminative backbone.
\end{proof}

% -------------------------------------------------------------------------
\subsubsection{Trajectory Alignment vs. Endpoint Alignment}

This is the central theoretical contribution. We show that alignment via shared noise trajectories bounds the \textbf{Fisher Divergence}, which is a stronger condition than minimizing Euclidean distance between endpoints.

\begin{lemma}[Equivalence to Score Matching]
    Minimizing the noise prediction error $\norm{\epsilon - \hat{\epsilon}_\theta(\mathbf{z}_t, t)}^2$ is equivalent to minimizing the Fisher Divergence between the data distribution and the model distribution~\cite{song2020score-Arxiv2020-SDE-Appendix}:
    \begin{equation}
        \mathcal{L}_{SM} = \frac{1}{2} \int_0^T g(t)^2 \E_{p_t(\mathbf{z})} \left[ \norm{\nabla_\mathbf{z} \log p_t(\mathbf{z}) - s_\theta(\mathbf{z}, t)}^2 \right] dt
    \end{equation}
\end{lemma}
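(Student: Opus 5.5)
The plan is to go through the standard denoising score matching identity of Vincent (\citealp{vincent2011connection-Neural2011-ScoreMatching-Appendix}) and then integrate over time with the weighting of \citet{song2020score-Arxiv2020-SDE-Appendix}. The lemma is meant as an equivalence of minimizers, up to a time-dependent weighting and a parameter-independent constant, not as a literal equality of objective values; I would state that precisely first.

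First, fix $t$ and use the forward kernel of Eq.~\ref{eq::diffusion_forward}, $p_{t|0}(\mathbf{z}_t\mid\mathbf{z}_0)=\mathcal{N}(\sqrt{\bar\alpha_t}\mathbf{z}_0,(1-\bar\alpha_t)\mathbf{I})$. Its conditional score is
\begin{equation}
\nabla_{\mathbf{z}_t}\log p_{t|0}(\mathbf{z}_t\mid\mathbf{z}_0)=-\frac{\mathbf{z}_t-\sqrt{\bar\alpha_t}\mathbf{z}_0}{1-\bar\alpha_t}=-\frac{\boldsymbol\epsilon}{\sqrt{1-\bar\alpha_t}}.
\end{equation}
Parametrize $s_\theta(\mathbf{z}_t,t)=-\hat{\boldsymbol\epsilon}_\theta(\mathbf{z}_t,t)/\sqrt{1-\bar\alpha_t}$. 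This gives the pointwise identity
\begin{equation}
\norm{\boldsymbol\epsilon-\hat{\boldsymbol\epsilon}_\theta}^2=(1-\bar\alpha_t)\norm{s_\theta-\nabla\log p_{t|0}}^2.
\end{equation}

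Second, prove Vincent's identity. Expand
\begin{equation}
\E_{p_t}\norm{s_\theta-\nabla\log p_t}^2=\E\norm{s_\theta}^2-2\E\langle s_\theta,\nabla\log p_t\rangle+C_1 .
\end{equation}
Rewrite the cross term using $p_t(\mathbf{z}_t)=\int p_{t|0}\,p_0\,d\mathbf{z}_0$ and differentiate under the integral, which gives $p_t\nabla\log p_t=\E_{\mathbf{z}_0}[p_{t|0}\nabla\log p_{t|0}]$. The cross term therefore equals $\E_{\mathbf{z}_0,\mathbf{z}_t}\langle s_\theta,\nabla\log p_{t|0}\rangle$. This shows that explicit and denoising score matching differ only by a constant $C_2(t)$ independent of $\theta$. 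Combining with the first step,
\begin{equation}
\E\norm{\boldsymbol\epsilon-\hat{\boldsymbol\epsilon}_\theta}^2=(1-\bar\alpha_t)\,\E_{p_t}\norm{\nabla\log p_t-s_\theta}^2+C(t).
\end{equation}

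Third, pass to the continuous limit. Identify $\bar\alpha_t$ with the VP-SDE marginal, i.e.\ take $\mathbf{f}=-\tfrac12\beta(t)\mathbf{z}$ and $g(t)^2=\beta(t)$ in the forward SDE above. Then average over $t$. Uniform sampling of $t$ in the discrete loss corresponds to the weight $\lambda(t)=1-\bar\alpha_t$. This is not the $g(t)^2$ weighting in the stated $\mathcal{L}_{SM}$, so I would argue equivalence in the following sense. Both objectives are nonnegative weighted integrals of the same per-$t$ Fisher divergence, with strictly positive weights on $(0,T]$. Hence they share the global minimizer $s_\theta=\nabla\log p_t$ for almost every $t$ whenever this function is realizable. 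Moreover, each objective bounds the other up to the factor $\sup_t g(t)^2/(1-\bar\alpha_t)$, which is finite away from $t=0$. The stratified tri-phase sampling only changes $\lambda(t)$ piecewise, so the same argument applies.

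The main obstacle is making the second and third steps rigorous. The second step needs regularity conditions to differentiate under the integral and to make $C_2(t)$ finite, such as finite second moments of $p_0$ and the decay of $p_t s_\theta$ needed for the integration by parts. The third step is the weighting mismatch just described. Both blow up as $t\to0$, because $1-\bar\alpha_t\to0$. I would first try to restrict to $t\in[\delta,T]$ and state the equivalence there. Since the discrete schedule has $t\ge1$, this loses nothing for the actual loss in DiffCrossGait.
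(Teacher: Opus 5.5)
Your argument is correct, and it goes further than the paper, which gives no proof of this lemma at all. The paper states the equivalence with the likelihood weighting $g(t)^2$ and relies entirely on the citation to Song et al. Your route reconstructs the argument behind that citation: the reparametrization $s_\theta=-\hat{\boldsymbol\epsilon}_\theta/\sqrt{1-\bar\alpha_t}$ combined with Vincent's explicit/denoising score-matching identity. Its payoff is that it pins down what ``equivalent'' can actually mean. From $\E\norm{\boldsymbol\epsilon-\hat{\boldsymbol\epsilon}_\theta}^2=(1-\bar\alpha_t)\,\E_{p_t}\norm{\nabla\log p_t-s_\theta}^2+C(t)$, uniform-$t$ noise prediction carries the weight $1-\bar\alpha_t$, not $g(t)^2$. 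So the displayed $\mathcal{L}_{SM}$ is matched only in the sense of a common minimizer when the true score is realizable, together with mutually comparable objectives on $[\delta,T]$. That is a correct but weaker statement than the paper's, and the bare citation glosses over the difference; the citation buys brevity, while your derivation buys a statement that is actually true. A few points to tighten. First, the two-sided comparison also needs $\sup_t (1-\bar\alpha_t)/g(t)^2<\infty$ (i.e.\ $\beta$ bounded below), not only the factor you name. Second, your derivation of Vincent's identity uses no integration by parts; it needs only the exchange of $\nabla_{\mathbf{z}_t}$ with the $\mathbf{z}_0$-integral and finiteness of $\E\norm{s_\theta}^2$ and the Fisher informations. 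Third, the constant is explicitly $C(t)=d-(1-\bar\alpha_t)\,\E_{p_t}\norm{\nabla\log p_t}^2$, with $d$ the latent dimension. This is free of the denoiser parameters but not of the backbone that produces $\mathbf{z}_0$, so the equivalence holds for fixed latents only. In DiffCrossGait the latents are trained jointly, and, for instance, latent collapse drives the noise loss to zero. Fourth, your remark that tri-phase sampling ``only changes $\lambda(t)$ piecewise'' is too quick. $\Ldiff$ is imposed only at $t^{(2)}$ and $t^{(3)}$, so the weight vanishes on $[1,t_a]$, and the actual loss controls only the restriction of $\mathcal{L}_{SM}$ to $(t_a,T]$.
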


\begin{theorem}[Trajectory Consistency Implies Distributional Alignment]
    Let $\mathbf{z}_{2d}$ and $\mathbf{z}_{3d}$ be latent representations of the same identity. Traditional endpoint alignment minimizes $\Lend = \norm{\mathbf{z}_{2d} - \mathbf{z}_{3d}}^2$. The DiffCrossGait trajectory alignment minimizes the gap between their induced score fields. If the shared denoising trajectory loss approaches zero, then the Kullback-Leibler (KL) divergence between the conditional distributions $p(\mathbf{z}_{t} | \mathbf{x}_{2d})$ and $p(\mathbf{z}_{t} | \mathbf{x}_{3d})$ is bounded by the integrated trajectory error.
\end{theorem}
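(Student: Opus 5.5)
The natural route is the Girsanov / data-processing argument used for score-based models (Song et al.), applied to the two modality-conditioned reverse processes instead of a data and a model process. Fix a paired sample and write $p^{m}_t(\mathbf{z}) := p(\mathbf{z}_t\mid \mathbf{x}_{m})$ for $m\in\{2d,3d\}$. Because the forward SDE has the same drift $\mathbf{f}$, the same diffusion coefficient $g$, and (by construction of DiffCrossGait) the same Brownian driver, the two path measures $\mathbb{P}^{2d},\mathbb{P}^{3d}$ on $C([0,T];\mathbb{R}^d)$ differ only through their initial laws. Their time-reversals are
\begin{equation*}
d\mathbf{z} = \big[\mathbf{f}(\mathbf{z},t) - g(t)^2 \nabla_{\mathbf{z}}\log p^{m}_t(\mathbf{z})\big]dt + g(t)\,d\bar{\mathbf{w}}.
\end{equation*}
These share a diffusion coefficient and differ only in drift. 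I will compare them by the scores, which is exactly the quantity the shared denoiser $U_\theta$ is trained on through the score-matching equivalence in the Lemma above.

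\textbf{Step 1 (Girsanov).} Assume Novikov's condition and that both reverse processes start from the common terminal law $p_T\approx\mathcal{N}(0,\mathbf{I})$. This is where the shared Gaussian prior $z_T$ is used; any residual shows up as an additive $\mathrm{KL}(p^{2d}_T\|p^{3d}_T)$. Girsanov's theorem then gives
\begin{equation*}
\mathrm{KL}\big(\mathbb{P}^{2d}\,\|\,\mathbb{P}^{3d}\big) \le \mathrm{KL}(p^{2d}_T\|p^{3d}_T) + \frac12\int_0^T g(t)^2\, \E_{p^{2d}_t}\!\left[\norm{\nabla\log p^{2d}_t - \nabla\log p^{3d}_t}^2\right]dt .
\end{equation*}
\textbf{Step 2 (data processing).} Marginalizing the path measures at any time $t$ transfers the bound to $\mathrm{KL}(p^{2d}_t\|p^{3d}_t)$.

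\textbf{Step 3 (insert the learned score).} Let $s_\theta(\mathbf{z},t,c^{m})=-\hat{\boldsymbol\epsilon}^{m}_t/\sqrt{1-\bar\alpha_t}$ be the learned score under each modality's conditioning. The triangle inequality and $(a+b+c)^2\le 3(a^2+b^2+c^2)$ split the integrand into three terms:
\begin{itemize}
\item the two per-modality score-matching errors, each controlled by $\mathcal{L}_{\mathrm{diff}}$ via the Lemma;
\item the cross term $\norm{s_\theta(\cdot,c^{2d})-s_\theta(\cdot,c^{3d})}^2$, which the shared-noise, shared-parameter design controls. For identical $\boldsymbol\epsilon$, $\mathcal{L}_{\mathrm{diff}}\to 0$ for both modalities forces $\hat{\boldsymbol\epsilon}^{2d}_t\approx\hat{\boldsymbol\epsilon}^{3d}_t\approx\boldsymbol\epsilon$, and $\mathcal{L}_{\mathrm{sym}}$ together with CrossCond at large $t$ pushes the denoiser toward conditioning-invariance.
\end{itemize}
Summing gives $\mathrm{KL}(p^{2d}_t\|p^{3d}_t)\lesssim \mathrm{KL}(p^{2d}_T\|p^{3d}_T)+ C\int_0^T g^2\,(\text{trajectory error})\,dt$. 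The right-hand side vanishes as the trajectory loss goes to zero, which is the claim.

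The main obstacle is Step 3's cross term. The cross-modal score gap is evaluated on $\mathbf{z}\sim p^{2d}_t$, while the 3D denoiser is only trained on $p^{3d}_t$, so a change of measure is needed. I would first try a bounded density ratio on the identity-relevant subspace. Failing that, I would use the explicit Gaussian form of the forward kernels: $p^{m}_t=\mathcal{N}(\sqrt{\bar\alpha_t}z^m_0,(1-\bar\alpha_t)\mathbf{I})$ given a fixed latent. This reduces the KL to $\bar\alpha_t\norm{z^{2d}_0-z^{3d}_0}^2/(2(1-\bar\alpha_t))$ and links it to $\mathcal{L}_{\mathrm{rec\text{-}cross}}$ via Eq.~\ref{eq::reconstruction}, since $\hat z_0$ errors equal $\sqrt{1-\bar\alpha_t}/\sqrt{\bar\alpha_t}$ times the noise-prediction errors. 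This also shows why the bound is strictly stronger than $\Lend$ alone: it weights endpoint error by the whole SNR schedule.
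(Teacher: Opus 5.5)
\textbf{The gap is the cross term in your Step 3.} It is not a remainder; it carries the whole claim. Vanishing $\mathcal{L}_{\mathrm{diff}}$ only gives $\hat{\boldsymbol\epsilon}_\theta(z^{m}_t,t,c^{m})\approx\boldsymbol\epsilon$ at each modality's \emph{own} point. Even with shared noise, those points differ: $z^{2d}_t-z^{3d}_t=\sqrt{\bar\alpha_t}\,(z^{2d}_0-z^{3d}_0)$. $\mathcal{L}_{\mathrm{sym}}$ likewise compares bottleneck states computed from these two different inputs. Neither constrains $s_\theta(\mathbf{z},t,c^{2d})-s_\theta(\mathbf{z},t,c^{3d})$ at a common $\mathbf{z}$. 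In fact, if both per-modality score errors vanish, the cross term \emph{equals} the true gap $\nabla\log p^{2d}_t-\nabla\log p^{3d}_t=\frac{\sqrt{\bar\alpha_t}}{1-\bar\alpha_t}(z^{2d}_0-z^{3d}_0)$. So vanishing SelfCond errors are compatible with an arbitrarily large modality gap, and your three-term split is circular. Your first repair also fails: two Gaussians with equal covariance and different means have an unbounded density ratio. A smaller fix: with $\int_0^T$, your Step 1 bound is $+\infty$ for point-mass initial laws, because the Fisher gap grows like $(1-\bar\alpha_s)^{-2}$ as $s\to0$. Integrate over $[t,T]$ instead, where the bound actually holds with equality.

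\textbf{Your fallback is what closes the argument.} It is best read as choosing the right anchor for the triangle inequality: the \texttt{CrossCond} score $s_\theta(\cdot,t,c^{3d})$ evaluated on the 2D trajectory. Let $\hat{\boldsymbol\epsilon}^{2d}_t$ and $\hat z^{2d}_{0,t}$ be the Phase~3 outputs at any $t\in(t_b,T]$. The Gaussian form of $p^m_t$ and the reconstruction formula then give the exact identity
\begin{equation*}
\nabla\log p^{2d}_t(z^{2d}_t)-\nabla\log p^{3d}_t(z^{2d}_t)=\frac{\hat{\boldsymbol\epsilon}^{2d}_t-\boldsymbol\epsilon}{\sqrt{1-\bar\alpha_t}}+\frac{\sqrt{\bar\alpha_t}}{1-\bar\alpha_t}\bigl(\hat z^{2d}_{0,t}-z^{3d}_0\bigr).
\end{equation*}
Both error terms live at the same point $z^{2d}_t\sim p^{2d}_t$, and they are controlled by $\mathcal{L}^{t^{(3)}}_{\mathrm{diff}}$ and $\mathcal{L}_{\mathrm{rec-cross}}$ respectively. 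The left side is deterministic, equal to $\frac{\sqrt{\bar\alpha_t}}{1-\bar\alpha_t}(z^{2d}_0-z^{3d}_0)$, and $\mathrm{KL}(p^{2d}_s\|p^{3d}_s)=\frac{\bar\alpha_s}{2(1-\bar\alpha_s)}\norm{z^{2d}_0-z^{3d}_0}^2$. Hence this controls the KL at every $s$. Note that it uses neither the shared noise nor $\mathcal{L}_{\mathrm{sym}}$.

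\textbf{Comparison with the paper.} The paper's sketch avoids your cross term differently. It idealizes the denoiser as a single unconditioned $U_\theta(\mathbf{z},t)$ fitted to both scores at the same $\mathbf{z}$, which also hides the measure mismatch you flagged. It then applies the triangle inequality to get a Fisher divergence and converts that to KL with a log-Sobolev constant. Your Girsanov route fits ``integrated trajectory error'' better and needs no LSI constant. However, it becomes a proof only with the cross-conditioned anchor above.

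Finally, drop your closing sentence. This computation shows that at each $t$ the KL, the Fisher gap and $\Lend$ are fixed SNR multiples of one another. So in this conditional setting the bound is equivalent to endpoint alignment, not strictly stronger.
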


\begin{proof}[Proof Sketch]
    Consider the shared diffusion process. The evolution of the probability density flow is governed by the Fokker-Planck equation. The KL divergence between the distributions of the two modalities at time $t=0$ can be bounded using the path integrals of their score functions.

    Let $\mathbf{s}_{2d}(\mathbf{z}, t) = \nabla \log p_t(\mathbf{z}|\mathbf{x}_{2d})$ and $\mathbf{s}_{3d}(\mathbf{z}, t) = \nabla \log p_t(\mathbf{z}|\mathbf{x}_{3d})$. The DiffCrossGait objective $\mathcal{L}_{diff}^{shared}$ enforces a shared denoiser $U_\theta$. Effectively, it minimizes the expected distance between the true scores of both modalities and the shared approximation:
    \begin{equation}
        \min_\theta \E_t \left[ \norm{\mathbf{s}_{2d}(\mathbf{z}, t) - U_\theta(\mathbf{z}, t)}^2 + \norm{\mathbf{s}_{3d}(\mathbf{z}, t) - U_\theta(\mathbf{z}, t)}^2 \right]
    \end{equation}
    By the triangle inequality, this implies minimizing the \textbf{Fisher Divergence} between the two modalities:
    \begin{equation}
        \mathcal{J}_{Fisher}(p_{2d} \| p_{3d}) = \int \norm{\nabla \log p_{2d}(\mathbf{z}) - \nabla \log p_{3d}(\mathbf{z})}^2 p(\mathbf{z}) d\mathbf{z}
    \end{equation}
    The Log-Sobolev inequality relates Fisher Divergence to KL Divergence:
    \begin{equation}
        KL(p_{2d} \| p_{3d}) \leq \frac{C}{2} \mathcal{J}_{Fisher}(p_{2d} \| p_{3d})
    \end{equation}
    
    \textbf{Crucial Distinction:} Endpoint alignment ($\Lend$) only constrains the zeroth-order moment (the means) of the distributions. Trajectory alignment ($\Ldiff$), by matching the gradients of the log-density (the score) via shared noise $\epsilon$ across all $t$, constrains the \textbf{entire geometry of the distribution}. Thus, minimizing the trajectory error guarantees alignment of the underlying manifolds, whereas endpoint alignment can be satisfied by trivial solutions (e.g., mapping everything to a single point) that do not preserve geometric structure.
\end{proof}

\begin{remark}
    This explains why DiffCrossGait is robust to covariates. Covariates (like clothing) distort the manifold. By enforcing alignment of the \textit{dynamics} (the vector field required to denoise), we force the backbone to extract invariant features that obey the same physical generation laws, regardless of the input modality.
\end{remark}

% -------------------------------------------------------------------------
\subsubsection{Generalization Bound for Cross-Modal Retrieval}

Finally, we analyze the method through the lens of Domain Adaptation theory to prove generalization capability.

\begin{definition}[$\mathcal{H}\Delta\mathcal{H}$-Divergence]
    ~\cite{ben2010theory-MachineLearning2010-DomainAdaptation-Appendix} Let $\mathcal{D}_S$ (2D) and $\mathcal{D}_T$ (3D) be source and target distributions over $\mathcal{Z}$. The discrepancy is:
    \begin{equation}
        d_{\mathcal{H}\Delta\mathcal{H}}(\mathcal{D}_S, \mathcal{D}_T) = 2 \sup_{h \in \mathcal{H}} \left| P_{\mathbf{z} \sim \mathcal{D}_S}[h(\mathbf{z}) \neq 1] - P_{\mathbf{z} \sim \mathcal{D}_T}[h(\mathbf{z}) \neq 1] \right|
    \end{equation}
    where $h$ is a domain discriminator.
\end{definition}

\begin{theorem}[Generalization Bound]
    For a hypothesis space $\mathcal{H}$, the expected error on the target domain (3D) $\epsilon_T(h)$ is bounded by:
    \begin{equation}
        \epsilon_T(h) \leq \epsilon_S(h) + \frac{1}{2} d_{\mathcal{H}\Delta\mathcal{H}}(\mathcal{D}_S, \mathcal{D}_T) + \lambda
    \end{equation}
    In DiffCrossGait, the shared-noise diffusion objective minimizes an upper bound of $d_{\mathcal{H}\Delta\mathcal{H}}(\mathcal{D}_S, \mathcal{D}_T)$.
\end{theorem}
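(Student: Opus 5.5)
The plan is to prove the statement in two separate parts. The first part is the displayed inequality, which is the standard bound of Ben-David et al.~\cite{ben2010theory-MachineLearning2010-DomainAdaptation-Appendix}. The second part is the claim that the shared-noise objective controls $d_{\mathcal{H}\Delta\mathcal{H}}$; for this I will chain elementary divergence inequalities into the trajectory result of the previous theorem.

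\textbf{Step 1 (the bound).} Let $h^\ast=\arg\min_{h\in\mathcal{H}}\,[\epsilon_S(h)+\epsilon_T(h)]$ and set $\lambda=\epsilon_S(h^\ast)+\epsilon_T(h^\ast)$. Write $\epsilon_D(h,h')$ for the disagreement probability of $h$ and $h'$ under a distribution $D$. The triangle inequality for this disagreement gives
\[
\epsilon_T(h)\le \epsilon_T(h^\ast)+\epsilon_T(h,h^\ast).
\]
Next I will bound $|\epsilon_T(h,h^\ast)-\epsilon_S(h,h^\ast)|$ by $\tfrac12 d_{\mathcal{H}\Delta\mathcal{H}}$. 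This holds because the indicator $\mathbf{1}[h\neq h^\ast]$ is a member of $\mathcal{H}\Delta\mathcal{H}$. One more use of the triangle inequality, $\epsilon_S(h,h^\ast)\le\epsilon_S(h)+\epsilon_S(h^\ast)$, then yields the stated inequality.

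\textbf{Step 2 (divergence to KL).} For any class of measurable discriminators, $d_{\mathcal{H}\Delta\mathcal{H}}(\mathcal{D}_S,\mathcal{D}_T)\le 2\,\mathrm{TV}(\mathcal{D}_S,\mathcal{D}_T)$, since the supremum is taken over a subfamily of all events. Pinsker's inequality then gives $\mathrm{TV}\le\sqrt{\tfrac12 KL(p_{2d}\|p_{3d})}$. It therefore suffices to bound the KL divergence between the latent laws at $t=0$.

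\textbf{Step 3 (KL to trajectory loss).} I do not want to invoke the log-Sobolev route of the previous theorem directly. That route needs an LSI constant for $p_{3d}$, which is unverifiable for learned latents. It also compares scores under a single reference measure. Instead I will use a Girsanov argument on the reverse-time SDEs~\cite{song2020score-Arxiv2020-SDE-Appendix}.
\begin{itemize}
\item Both modalities' reverse processes are driven by the same Brownian motion, which is exactly the shared $\epsilon$. They differ only through their score drifts.
\item The data-processing inequality combined with Girsanov's theorem then gives
\[
KL(p^{2d}_0\|p^{3d}_0)\le KL(p^{2d}_T\|p^{3d}_T)+\tfrac12\int_0^T g(t)^2\,\E\big\|\mathbf{s}_{2d}-\mathbf{s}_{3d}\big\|^2dt .
\]
\item The first term vanishes as $\bar\alpha_T\to0$, because both marginals converge to the same Gaussian prior.
\item For the integrand, I insert the shared denoiser $U_\theta$ and use $\|a-b\|^2\le 2\|a-U\|^2+2\|b-U\|^2$. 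Together with the Lemma (equivalence to score matching), this bounds the integral by twice the summed $\Ldiff$ of the two modalities, up to the constant that relates noise-prediction error to score error. The Phase~2 and Phase~3 terms cover the medium and large $t$ range, and $\mathcal{L}_{\mathrm{sym}}$ and $\mathcal{L}_{\mathrm{rec\text{-}cross}}$ further tighten the cross-score gap.
\end{itemize}
Composing Steps 2 and 3 shows that $\tfrac12 d_{\mathcal{H}\Delta\mathcal{H}}$ is at most a monotone function (a square root) of the training objective. This is the claimed upper bound.

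\textbf{Main obstacle.} The hardest step will be Step 3. Girsanov requires Novikov-type integrability of the drift difference, and the expectation in the bound is taken under one modality's path measure. The diffusion loss, by contrast, is evaluated under each modality's own marginal. What the objective controls is the learned-score error, not the true-score gap. I would first try to assume bounded, Lipschitz scores and a common support, and to absorb the mismatch of expectation measures into a density-ratio constant. If that fails, I would fall back to stating the bound for the smoothed marginals at a small $t_0>0$, where the Gaussian convolution supplies the needed regularity.
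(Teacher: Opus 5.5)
Your Step 1 is the standard Ben-David argument and is correct. It needs $d_{\mathcal{H}\Delta\mathcal{H}}$ to be the supremum over $\mathcal{H}\Delta\mathcal{H}$, as in Ben-David, rather than over $\mathcal{H}$ as the paper's Definition literally reads. Step 2 is also fine. The paper only cites the inequality, and it supports the second claim qualitatively: $\mathcal{L}_{\mathrm{sym}}+\Ldiff$ forces $U_\theta(\mathbf{z}_{2d}+\epsilon)\approx U_\theta(\mathbf{z}_{3d}+\epsilon)$ under the same $\epsilon$, hence ``indistinguishability''. Your Pinsker--Girsanov chain is a different, quantitative route, but Step 3 does not close, and neither of your remedies works.

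\textbf{The cross term.} After inserting $U$, the term $\E_{p^{2d}_t}\norm{\mathbf{s}_{3d}-U}^2$ is a 3D score error under the 2D marginal, which $\Ldiff$ does not control. The density ratio is unbounded in general: for point masses $\delta_a,\delta_b$ one has $p^{2d}_t/p^{3d}_t\propto\exp\big(\sqrt{\bar\alpha_t}\langle\mathbf{z},a-b\rangle/(1-\bar\alpha_t)\big)$. Assuming a bound $R$ on it presupposes alignment, since it gives $KL(p^{2d}_t\|p^{3d}_t)\le\log R$ outright. What does work, for an unconditional shared field, is comparing each modality with the mixture $\bar p_t=\tfrac12(p^{2d}_t+p^{3d}_t)$:
\begin{itemize}
\item $\nabla\log\bar p_t$ minimizes $\sum_m\E_{p^m_t}\norm{U-\mathbf{s}_m}^2$, so the on-marginal errors dominate $\sum_m\E_{p^m_t}\norm{\mathbf{s}_m-\nabla\log\bar p_t}^2$.
\item Applying your Girsanov relation to each pair $(p^m,\bar p)$ and averaging gives $\mathrm{JS}(p^{2d}_0,p^{3d}_0)\le\mathrm{JS}(p^{2d}_T,p^{3d}_T)+\tfrac14\int_0^T g(t)^2\sum_m\E_{p^m_t}\norm{\mathbf{s}_m-\nabla\log\bar p_t}^2dt$.
\item JS is finite even when $KL(p^{2d}_0\|p^{3d}_0)=+\infty$, and $\mathrm{TV}^2\le 2\,\mathrm{JS}$.
\end{itemize}
However, the paper's $U_\theta$ is conditioned on the per-sample prior $c^m$, swapped in Phase 3, so the two branches do not fit one field. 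This unconditional abstraction, which your triangle step also needs, must be stated as an explicit assumption.

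\textbf{The shared noise.} Nothing in your chain uses it. The KL between path laws ignores how the driving noises are coupled. Also, $\E[\Ldiff]$ is unchanged if the modalities get independent $\epsilon$, since each summand involves one modality's noise only. Your argument would therefore certify the independent-noise ablation equally well. Neither $\mathcal{L}_{\mathrm{sym}}$ nor $\mathcal{L}_{\mathrm{rec-cross}}$ enters a pointwise bound on $\norm{\mathbf{s}_{2d}-\mathbf{s}_{3d}}$. The former compares bottleneck activations at two different inputs; the latter compares a denoised output with the other modality's clean latent. The paper's sketch at least engages the sharing, though it never reaches $d_{\mathcal{H}\Delta\mathcal{H}}$ quantitatively.

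\textbf{The time range.} This is the decisive problem. Your integral runs over $[0,T]$, but $\Ldiff$ is imposed only on $(t_a,T]$; Phase 1 carries only $\mathcal{L}_{\mathrm{id}}$. Converting noise-prediction error into score error (with $s_\theta=-\hat{\boldsymbol\epsilon}/\sqrt{1-\bar\alpha_t}$), $\E_{p_t}\norm{s_\theta-\nabla\log p_t}^2\le\E\norm{\hat{\boldsymbol\epsilon}-\boldsymbol\epsilon}^2/(1-\bar\alpha_t)$, yields the weight $g(t)^2/(1-\bar\alpha_t)\sim 1/t$ near $0$, not a constant. Your fallback to $t_0>0$, necessarily with $t_0\ge t_a$, bounds only the divergence of the smoothed laws. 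By data processing that divergence is at most the clean one, which is the wrong direction, and indicator hypotheses gain nothing from smoothing.

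Concretely, put all 2D latents at $a$ and all 3D latents at a nearby $b\ne a$. Then $d_{\mathcal{H}\Delta\mathcal{H}}=2$ for any $\mathcal{H}$ separating $a$ from $b$. Yet for a fixed denoiser every diffusion-branch loss is continuous in the clean latents, hence as close as you like to its value at $a=b$. So this objective cannot control $d_{\mathcal{H}\Delta\mathcal{H}}(\mathcal{D}_S,\mathcal{D}_T)$ on clean latents for such $\mathcal{H}$. What is provable is a bound for the $t_a$-smoothed laws, or for a correspondingly restricted $\mathcal{H}$. That is in effect the restriction the paper makes implicitly with ``discriminators relying on structural features exposed by the diffusion process.''
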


\begin{proof}[Proof Sketch]
    The term $d_{\mathcal{H}\Delta\mathcal{H}}$ measures the distinguishability of the two domains. In our framework, the diffusion branch acts as a conditional generative model $p_\theta(\mathbf{x} | \mathbf{z})$. The shared U-Net $U_\theta$ attempts to denoise both $\mathbf{z}_{2d}$ and $\mathbf{z}_{3d}$ using the \textit{same} parameters and the \textit{same} noise $\epsilon$.
    
    If the distributions $\mathcal{D}_S$ and $\mathcal{D}_T$ were disjoint (misaligned), the optimal denoising directions for $\mathbf{z}_{2d}$ and $\mathbf{z}_{3d}$ would be orthogonal or contradictory for a given $\epsilon$. The loss $\mathcal{L}_{sym} + \mathcal{L}_{diff}$ forces:
    \begin{equation}
        \E [ \norm{U_\theta(\mathbf{z}_{2d} + \epsilon) - U_\theta(\mathbf{z}_{3d} + \epsilon)}^2 ] \to 0
    \end{equation}
    This implies that for any discriminator $h$ relying on structural features exposed by the diffusion process, the samples are indistinguishable. The shared diffusion process implicitly matches the \textbf{supports} of the two distributions. Therefore, optimizing the trajectory consistency explicitly reduces the domain divergence term $d_{\mathcal{H}\Delta\mathcal{H}}$, leading to a tighter bound on the target (3D) generalization error compared to methods that only minimize source risk $\epsilon_S$.
\end{proof}

\subsection{Qualitative Retrieval Analysis}
\label{app:qualitative_retrieval}

To complement the distribution-level visualizations in the main paper, we further conduct a retrieval-level comparison with SCR~\cite{yu2025no-Exp-IF2025-VIReID-SCR} on a manually curated challenging subset. This subset contains hard
cross-modal cases with compound covariates such as bags, umbrellas, clothing changes, and disrupted contours. As summarized in Tab.~\ref{tab:hard_subset_summary}, DiffCrossGait obtains 29/40 Top-1 correct retrievals, while SCR obtains 17/40. This subset is used only for diagnostic analysis and is not intended to replace the standard benchmark evaluation reported in the main paper.

Fig.~\ref{fig:hard_subset_comparison} shows two representative retrieval examples, covering both 2D$\rightarrow$3D and 3D$\rightarrow$2D directions. These examples illustrate a common pattern observed in the challenging subset: SCR is more likely to confuse identities when contours are disrupted by compound shifts, whereas DiffCrossGait better preserves cross-modal identity consistency through trajectory-level regularization.

\begin{table}[t]
\centering
\caption{Rank-1 retrieval summary on a manually curated challenging subset. The subset is used for qualitative diagnosis under compound covariate shifts.}
\label{tab:hard_subset_summary}
\resizebox{0.35\columnwidth}{!}{
\begin{tabular}{lcc}
\toprule
Method & Rank-1 Correct & Total Cases \\
\midrule
SCR & 17 & 40 \\
DiffCrossGait & \textbf{29} & 40 \\
\bottomrule
\end{tabular}}
\end{table}

\begin{figure}[!htbp]
\centering
\includegraphics[width=0.6\textwidth]{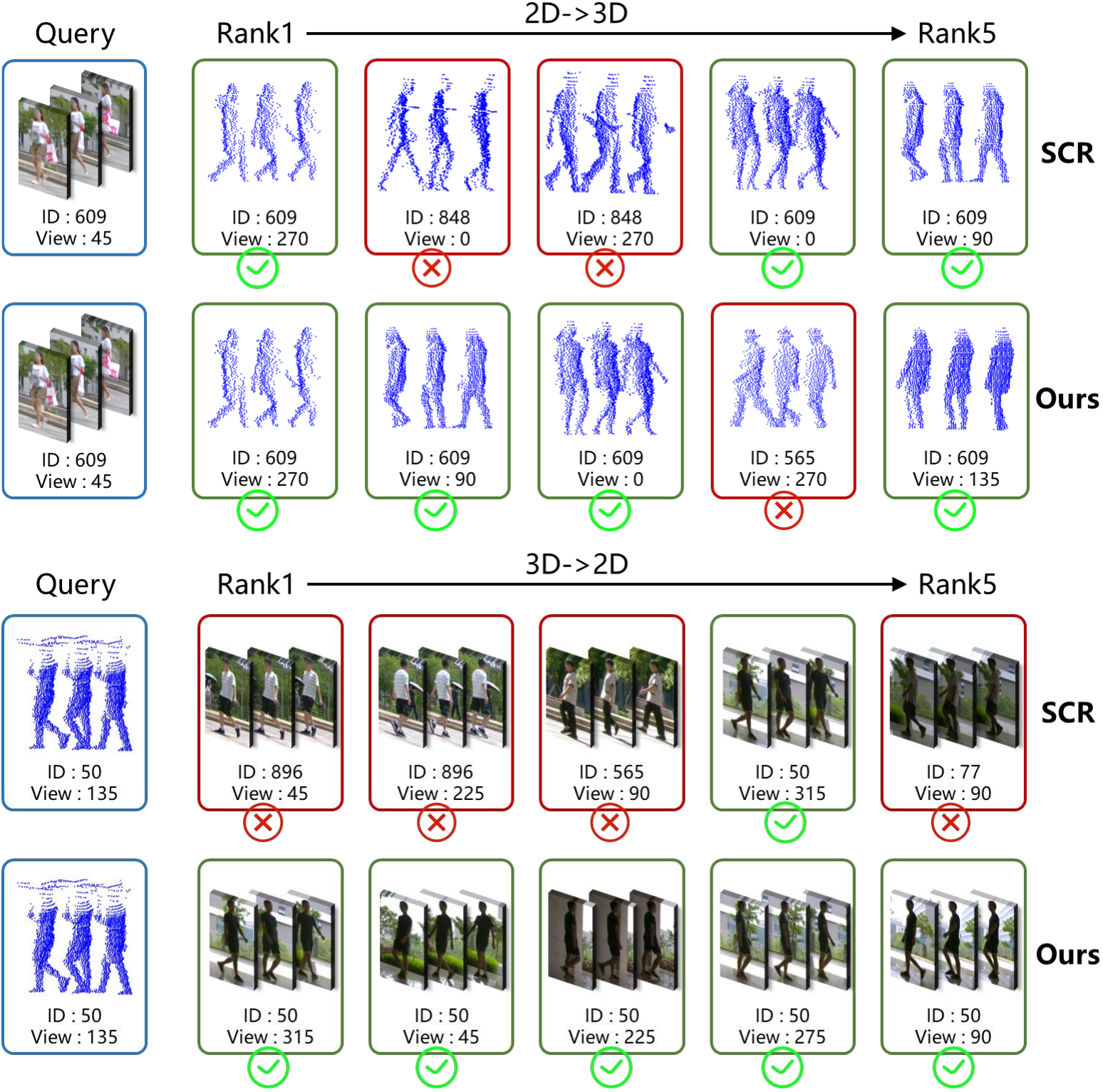}
\caption{Representative retrieval examples from the challenging subset.}
\label{fig:hard_subset_comparison}
\end{figure}

\subsection{Failure Cases}
\label{app:failure_cases}
We also visualize representative failure cases in Fig.~\ref{fig:failure_cases}. The dominant failure pattern is extreme information scarcity: when night-time degradation and severe geometric distortion occur simultaneously, both modalities may lose identity-related structural anchors. In such cases, the denoising process lacks sufficient residual evidence to recover a shared clean latent state, causing cross-modal trajectory alignment to fail.

\begin{figure}[!htbp]
\centering
\includegraphics[width=0.6\textwidth]{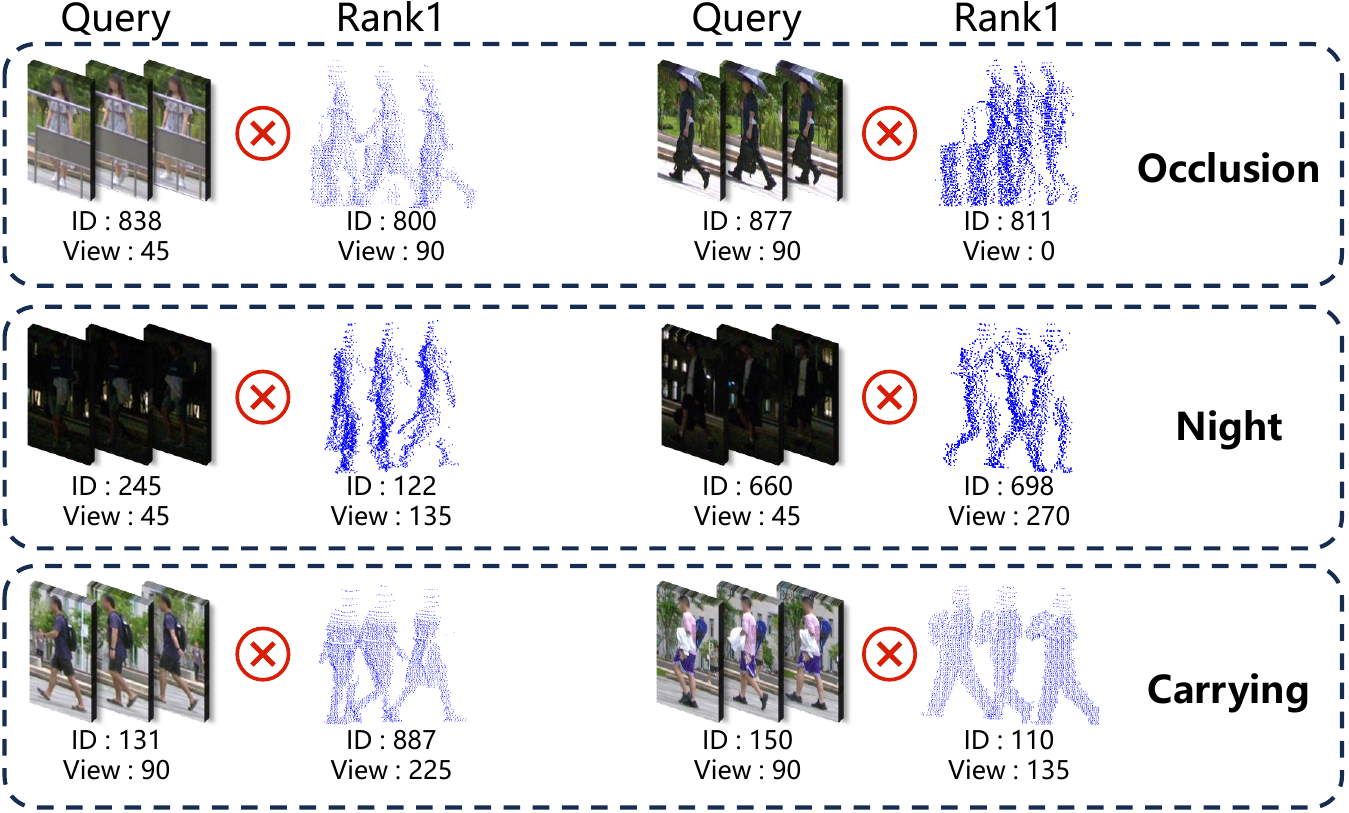}
\caption{Representative failure cases. DiffCrossGait may fail when both visual and geometric structural cues are severely degraded, such as night-time degradation, occlusion, or severe carrying conditions.}
\label{fig:failure_cases}
\end{figure}

\end{document}